%% file: icml2026.tex
\documentclass{article}
\usepackage{tcolorbox}
\usepackage{booktabs}
\usepackage{xcolor}
\usepackage{colortbl}
\usepackage{tabularx}
\tcbuselibrary{skins, breakable}
\usepackage{tcolorbox}
\usepackage{enumitem}
\usepackage[section]{placeins}
\usepackage{microtype}
\usepackage{graphicx}
\usepackage{subcaption}
\usepackage{booktabs} 

\usepackage{hyperref}

\usepackage[preprint]{icml2026}

\usepackage{amsmath}
\usepackage{amssymb}
\usepackage{mathtools}
\usepackage{amsthm}
\usepackage{multirow}
\usepackage{tcolorbox}
\usepackage{lipsum}

\tcbset{
    myfinding/.style={
        colback=blue!5,
        colframe=blue!70,
        fonttitle=\bfseries,
        boxrule=0.8pt,
        arc=4mm,
        left=2mm,
        right=2mm,
        top=1mm,
        bottom=1mm,
        breakable
    }
}

\usepackage{todonotes}

\usepackage[capitalize,noabbrev]{cleveref}

\theoremstyle{plain}
\newtheorem{theorem}{Theorem}[section]
\newtheorem{proposition}[theorem]{Proposition}
\newtheorem{lemma}[theorem]{Lemma}
\newtheorem{corollary}[theorem]{Corollary}
\theoremstyle{definition}
\newtheorem{definition}[theorem]{Definition}
\newtheorem{assumption}[theorem]{Assumption}
\theoremstyle{remark}
\newtheorem{remark}[theorem]{Remark}

\icmltitlerunning{Submission and Formatting Instructions for ICML 2026}

\begin{document}

\twocolumn[
  \icmltitle{Probing How Scalable Table Data Enhances General Long-Context Reasoning}

  \icmlsetsymbol{equal}{*}

  \begin{icmlauthorlist}
    \icmlauthor{Huaibing Xie}{equal,tc}
    \icmlauthor{Guoliang Zhao}{equal,tc,sch}
    \icmlauthor{Yang Liu}{equal,tc}
    \icmlauthor{Shihan Dou}{tc}
    \icmlauthor{Siming Huang}{sch1}
    \icmlauthor{Yanling Xiao}{tc}
    \icmlauthor{Shaolei Wang}{tc}
    \icmlauthor{Yiting Liu}{tc}
    \icmlauthor{Cheng Zhang}{tc}
    \icmlauthor{Shaofan Liu}{tc}
    \icmlauthor{Pluto Zhou}{tc}
  \end{icmlauthorlist}

  \icmlaffiliation{tc}{Large Language Model Department, Tencent}
  \icmlaffiliation{sch}{Xi'an Jiaotong University, Xi'an, China}
  \icmlaffiliation{sch1}{Fudan University, Shanghai, China}

  \icmlcorrespondingauthor{Huaibing Xie}{huaibingxie@tencent.com}
  \icmlcorrespondingauthor{Guoliang Zhao}{zgl934455716@stu.xjtu.edu.cn}
  \icmlcorrespondingauthor{YangLiu}{liuyang2020@amss.ac.cn}
  
  \icmlkeywords{Machine Learning, ICML}

  \vskip 0.3in
]

\printAffiliationsAndNotice{\icmlEqualContribution}

\input{src/abstract}
\input{src/introduction}
\input{src/motivation}
\input{src/methods}
\input{src/results}
\input{src/related_work}
\input{src/conclusion_and_limitations}

\section*{Impact Statement}
This paper presents work whose goal is to advance the field of Machine
Learning. There are many potential societal consequences of our work, none
which we feel must be specifically highlighted here.

\bibliography{icml2026}
\bibliographystyle{icml2026}

\newpage
\appendix

\input{src/appendix_theo}
\input{src/appendix_configdetails}
\input{src/appendix_benchmarks}
\input{src/appendix_details_exps}
\input{src/appendix_exps_length}
\input{src/appendix_pipeline}
\input{src/appendix_performance_case_study}
\input{src/limitation}

\end{document}

%% file: src/abstract.tex
\begin{abstract}
As real-world tasks grow increasingly complex, long-context reasoning has become a core capability for Large Language Models (LLMs). However, few studies explore which data types are effective for long-context reasoning and why. We find that structured table data with periodic structures shows strong potential for long-context reasoning. Motivated by this observation, we mathematically analyze tabular dependency structures using mutual information, revealing periodic non-vanishing dependencies in table data. Furthermore, we systematically analyze the capabilities of structured table data, conduct relevant scaling experiments, and validate its underlying mechanisms for enhancing long-context reasoning, yielding several meaningful insights. Leveraging these insights, we propose a simple yet scalable pipeline(TableLong) for synthesizing high-quality, diverse, and verifiable structured table data to boost long-context reasoning via RL. Extensive experimental results demonstrate that table data significantly enhances the long-context reasoning capability of LLMs across multiple long-context benchmarks (+8.24\% on average), and even improves performance on out-of-domain benchmarks (+8.06\% on average). We hope that our insights provide practical guidance for effective post-training data to enhance long-context reasoning in LLMs.

\end{abstract}

%% file: src/introduction.tex
\section{Introduction}
\label{sec:introduction}
As real-world tasks become increasingly complex, long-context reasoning has emerged as a fundamental capability of Large Language Models (LLMs) \cite{li2024large}. Reinforcement Learning (RL) for long-context reasoning requires LLMs to retrieve and ground key information from long-context inputs, and then generate Chain-of-Thought (CoT) reasoning based on the integrated information \cite{guu2020retrieval, ram2023context, wan2025qwenlong, shen2025qwenlong}. Existing studies on long-context reasoning in LLMs primarily focus on training strategies and methodologies \cite{yang2025qwen3, team2025kimi, comanici2025gemini, zeng2025glm, wan2025qwenlong, shen2025qwenlong}.In addition, some emerging works have started to focus on synthetic data pipelines for long-context reasoning tasks \cite{zhu2025generalizing, yang2025longfaith, zhang2025re3syn}. Furthermore, table data has also been introduced as a benchmark to evaluate long-context capabilities \cite{wang2025needleinatable}. However, few studies have analyzed how the intrinsic characteristics of different types of data influence the long-context reasoning capability of LLMs, which holds significant implications for effective post-training data to enhance long-context reasoning in LLMs.

Ideal long-context reasoning data should possess long-range dependencies, scalability, and verifiability. However, natural text, and even some synthetic plain text, often fail to satisfy all of these abilities simultaneously. In contrast, table data with periodic non-vanishing structural property demonstrates strong scalability, verifiability, multi-hop reasoning ability, and knowledge grounding ability, which exhibits strong potential for adapting to long-context reasoning capability in LLMs \cite{wu2025table, lei2025reasoning}. Therefore, our work aims to investigate the impact of structured table data on the long-context reasoning capability of LLMs and to dissect the underlying mechanisms driving this effect.

To achieve the aforementioned objectives, we first mathematically analyze tabular dependency structures via mutual information, revealing the existence of periodic non-vanishing dependencies. We then validate the mechanisms by which table data enhances long-context reasoning through systematic scaling experiments. Leveraging these insights, we propose a simple yet scalable pipeline(TableLong) for synthesizing high-quality, diverse, and verifiable structured table data to boost long-context reasoning via RL. Specifically, we synthesize diverse SQL tasks with verifiable execution results, applying a consistency-based filter to prune noise and retain challenging samples. By incorporating structured table data into RL training, we observe significant performance improvements across the majority of long-context benchmarks, and further validate its generalization capability on Out-of-Domain (OOD) benchmarks, such as Math (AIME 2025)\cite{balunovic_srimatharena_2025} and Code (LiveCodeBench)\cite{jain2024livecodebench}.
Overall, our key contributions are as follows:
\begin{enumerate}
    \item We are the first to mathematically analyze tabular dependency using mutual information, revealing periodic non-vanishing structural property of table data.
    \item We propose a simple yet scalable pipeline for synthesizing high-quality, diverse, and verifiable structured table data tailored for long-context reasoning.
    \item Results demonstrate that table data significantly enhances the long-context reasoning capability of LLMs, generalizes to other domains, and we systematically decompose and analyze it to derive meaningful insights.
\end{enumerate}

%% file: src/motivation.tex
\section{Mutual Information Analysis of Table Data}
\label{sec:motivation}
Table data is widely used in real-world applications, yet its structural properties for LLM training remain underexplored. Unlike natural language with sequential tokens and decaying contextual relevance, table data exhibits rigid column-wise structures that induce distinct dependency patterns. 

We provide an information-theoretic framework to analyze long-context dependencies in table data. We show that tables preserve \textbf{periodic non-vanishing} structural dependencies, in contrast to the power-law decay observed in natural language. To our knowledge, this is the first mutual information-based analysis of tabular dependency structures. These properties make table data well suited for training long-context reasoning capability of LLMs.

\subsection{Background: Dependency Decay in Natural Language}

For natural language sequences, empirical studies~\cite{li1989mutual,lin2016critical} demonstrate that Mutual Information (MI) between tokens decays polynomially with distance ($d$):
\begin{equation}
I_{text}(d) \sim C \cdot d^{-\alpha}, \quad \text{with } \alpha \approx 0.5
\label{eq:powerlaw}
\end{equation}
This power-law decay, which we adopt as a modeling assumption for natural language throughout this work, implies $\lim_{d \rightarrow \infty} I_{text}(d) = 0$. A natural question arises: \textit{does table data exhibit the same decay pattern?}

\subsection{Setup: Table Data and Linearization}

A table $\mathcal{T}$ is an $n \times m$ structure with column headers $\mathcal{H} = \{H_1, \ldots, H_m\}$ and cells $\{T_{i,j}\}_{i \in [n], j \in [m]}$, where $[n] := \{1, \ldots, n\}$ and each cell $T_{i,j}$ is drawn from a column-specific distribution $P_j$. The row-major linearization $\phi: \mathcal{T} \rightarrow W_{1:L}$ maps a table to a token sequence $\phi(\mathcal{T})$:
\begin{equation*}
[H_1, \ldots, H_m, \underbrace{T_{1,1}, \ldots, T_{1,m}}_{\text{row 1}}, \underbrace{T_{2,1}, \ldots, T_{2,m}}_{\text{row 2}}, \ldots, T_{n,m}]
\end{equation*}
with total length $L = m(n+1)$, where $W_t$ denotes the $t$-th token. For position $t > m$ in the data region, let $\text{col}(t) \in [m]$ denote its column index.

For lag $d \in \{1, \ldots, L-m-1\}$, define the average MI over the data region:
\begin{align*}
\bar{I}_{table}(d) &= \frac{1}{|\mathcal{S}_d|} \sum_{t \in \mathcal{S}_d} I(W_t; W_{t+d}), \\
\mathcal{S}_d &= \{t : m < t \leq L - d\}
\end{align*}

Our analysis rests on the following assumptions (formal statements in Appendix~\ref{app:assumptions}):
\begin{itemize}
    \item[(A1)] \textit{Column Semantic Consistency}: All cells in column $j$ share the same distribution $P_j$.
    \item[(A2)] \textit{Column Distribution Distinctiveness}: Different columns have distinguishable distributions, i.e., $D_{KL}(P_j \| P_k) > 0$ for $j \neq k$, where $D_{KL}$ denotes the Kullback-Leibler divergence.
\end{itemize}
We do \textbf{not} assume that cells within the same column are conditionally independent; our results hold regardless of intra-column dependencies (see Remark~\ref{rmk:intra_col} in the Appendix).

\subsection{Periodic Non-Vanishing Dependencies}

We first introduce a key factor. Let $J \sim \text{Uniform}([m])$ be a randomly chosen column index.

\begin{definition}[Same-Column Mutual Information]
\label{def:same_col_mi}
The same-column MI is defined as:
\begin{equation}
I^{same} := I(T_{i,J}; T_{k,J})
\end{equation}
\end{definition}
As shown in Lemma~\ref{lem:same_col}, under Assumptions (A1)--(A2): (i) $I^{same} > 0$, and (ii) $I(T_{i,J}; T_{k,J}) = I^{same}$ for any $i \neq k$, i.e., same-column MI is independent of row distance.

The magnitude of $I^{same}$ depends on how distinguishable the column distributions are. As characterized in Proposition~\ref{prop:isame_characterization}, $I^{same}$ is determined by the cross-column variance $\sum_a \text{Var}_j[P_j(a)]$, where $\text{Var}_j[\cdot]$ denotes variance across columns. When column distributions are highly distinct, this variance is large, yielding larger $I^{same}$.

\begin{theorem}[Periodic Non-Vanishing Dependency]
\label{thm:main}
Under Assumptions (A1)--(A2), for any table with $n$ rows and $m$ columns:
\begin{equation}
\bar{I}_{table}(km) = I^{same} > 0, \quad \forall k \in \{1, 2, \ldots, n-1\}
\end{equation}
That is, $\bar{I}_{table}(d)$ attains periodic peaks of constant height $I^{same}$ at every multiple of the column count $m$.
\end{theorem}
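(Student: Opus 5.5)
The plan is to reduce the average $\bar{I}_{table}(km)$ to an average of same-column mutual informations, and then apply Lemma~\ref{lem:same_col}.

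\textbf{Step 1: same-column alignment at lag $km$.} For a data position $t>m$, write $t = m\,r + j$ with row index $r\in[n]$ and column $j=\text{col}(t)\in[m]$, so that $W_t = T_{r,j}$. Shifting by $d=km$ gives $t+d = m(r+k)+j$. Hence $W_{t+km} = T_{r+k,j}$, which lies in the same column, $k$ rows later. It stays inside the table exactly when $t\le L-km$, i.e.\ for $t\in\mathcal{S}_{km}$. The set $\mathcal{S}_{km}$ is nonempty iff $k\le n-1$, which explains the stated range of $k$. Every summand therefore has the form $I(T_{r,j};T_{r+k,j})$ with $k\ge1$, so the two cells are always in distinct rows of a common column.

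\textbf{Step 2: reduce to $I^{same}$.} By Lemma~\ref{lem:same_col}(ii), same-column MI does not depend on the row distance. I would read this together with (A1) as saying the column-averaged quantity is the same for every pair of distinct rows. The summands are grouped by column: as $t$ ranges over $\mathcal{S}_{km}$, the columns $j$ appear equally often. Indeed, $\mathcal{S}_{km}$ consists of the complete rows $1,\dots,n-k$, so $|\mathcal{S}_{km}| = m(n-k)$ with each column appearing $n-k$ times. The average then becomes
\begin{equation*}
\bar{I}_{table}(km) = \frac{1}{n-k}\sum_{r=1}^{n-k} \frac{1}{m}\sum_{j=1}^m I(T_{r,j};T_{r+k,j}).
\end{equation*}
Each inner average equals $I^{same}$ under the paper's convention with $J\sim\mathrm{Uniform}([m])$. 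Here I would use the formal definition from the appendix, in which $J$ is a latent column label and the column-averaged pairwise MI is identified with $I(T_{i,J};T_{k,J})$. The outer average of a constant is that constant, so $\bar{I}_{table}(km) = I^{same}$.

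\textbf{Step 3: positivity.} Positivity is exactly Lemma~\ref{lem:same_col}(i), which follows from (A2) via the shared latent $J$. Two cells of a common random column carry information about $J$, and $J$ is non-degenerate because the $P_j$ differ, so $I^{same}>0$. Proposition~\ref{prop:isame_characterization} quantifies this through the cross-column variance, but it is not needed for the statement.

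\textbf{Main obstacle.} The delicate step is Step 2: matching the per-position quantities $I(W_t;W_{t+d})$ with the definition of $I^{same}$, which is stated with a random column $J$. Taken literally, the average of $I(T_{r,j};T_{r+k,j})$ over $j$ is not the same as $I(T_{r,J};T_{r+k,J})$, since mutual information is not linear in a mixture. I would first try to resolve this by treating the position's column as random, in the sense of the appendix's formal Assumption~\ref{app:assumptions}. Under that reading, averaging over $t\in\mathcal{S}_{km}$ is the expectation under a uniform column label, and the uniformity of columns within complete rows established in Step 1 makes the identification exact. If the appendix instead defines $I^{same}$ as the column-averaged MI, Step 2 is immediate.
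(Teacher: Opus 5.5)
Your Step 1 is correct, including $|\mathcal{S}_{km}| = m(n-k)$ with every column appearing $n-k$ times. The gap you flag in Step 2 is real, and it cannot be closed under the paper's definitions.

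Since $\mathrm{col}(t)$ is a deterministic function of $t$, each summand of $\bar I_{table}(km)$ is a fixed-column quantity. Let $J\sim\mathrm{Uniform}([m])$ be independent of the cells, and set $X := T_{r,J}$, $Y := T_{r+k,J}$. Then your column-balanced average is
\[
\frac{1}{m}\sum_{j=1}^m I(T_{r,j};T_{r+k,j}) \;=\; I(X;Y\mid J), \qquad\text{whereas}\qquad I^{same} = I(X;Y).
\]
By the chain rule, $I(X;Y) = I(X;Y\mid J) + I(X;J) - I(X;J\mid Y)$. The mechanism behind Lemma~\ref{lem:same_col}(i), two cells sharing an \emph{unknown} column, lives in the last two terms, which a positional average never sees.

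Concretely, take cells i.i.d.\ within each column. This is allowed by (A1)--(A2), and it is the very case the paper uses to compute $I^{same}$. Every summand is then $0$, so $\bar I_{table}(km) = 0$ while $I^{same} = D_{KL}(R\,\|\,Q\otimes Q) > 0$.

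Your proposed resolution does not help. ``Averaging over $t$ is the expectation under a uniform column label'' yields exactly $I(X;Y\mid J)$, and uniform column frequency cannot turn an average of mutual informations into the mutual information of a mixture. The appendix also makes no such identification; it simply defines $I^{same} := I(T_{1,J};T_{2,J})$. Your fallback, defining $I^{same}$ as the column-averaged MI, makes Step 2 a tautology but breaks Step 3. Assumption (A2) says nothing about $I(X;Y\mid J)$, which vanishes in the i.i.d.\ case.

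The paper's proof takes the same route as yours and makes the same identification without comment. In the proof of Lemma~\ref{lem:periodic}, Step 2, it writes $I(T_{r_1,c};T_{r_1+k,c}) = I^{same}$ for the fixed column $c=\mathrm{col}(t)$ ``by Lemma~\ref{lem:same_col}(ii)'', although that lemma concerns the random column $J$. So you have located a genuine flaw, not missed a trick.

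The randomness has to be built into the model or the definition; it cannot be recovered by averaging. Two options work:
\begin{itemize}
\item Give the table a uniformly random column order. Then the cell at each position comes from a uniform column, and each summand is literally $I(T_{r,J};T_{r+k,J})$.
\item Define the lag-$d$ dependency as the MI of the pooled pair, $I(W_T;W_{T+d})$ with $T\sim\mathrm{Uniform}(\mathcal{S}_d)$. Here your Step 1 count is exactly the needed ingredient. At $d=km$ the pooled joint is $\frac{1}{m(n-k)}\sum_{r=1}^{n-k}\sum_{j=1}^m P_j(a)P_j(b) = R(a,b)$, with marginals $Q$, so the pooled MI equals $I^{same}>0$.
\end{itemize}

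Either fix is clean only in the conditionally independent case. With intra-column dependence you also need the pair law of $(T_{r,j},T_{s,j})$ to be the same for all $r\neq s$. Even then, positivity no longer follows from (A2) alone: negatively correlated cells in one column can make the mixed joint equal $Q\otimes Q$.
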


Note that non-vanishing property holds \textit{specifically at periodic lags} $d = km$; at others, average MI may be smaller.

\begin{corollary}[Asymptotic Non-Decay]
\label{cor:liminf}
For a sequence of tables with $n \to \infty$ rows and fixed $m$ columns, under Assumptions (A1)--(A2):
\begin{equation}
\liminf_{d \rightarrow \infty} \bar{I}_{table}(d) \geq I^{same} > 0
\end{equation}
This contrasts starkly with natural language, where $\lim_{d \rightarrow \infty} I_{text}(d) = 0$ based on Eq.~\eqref{eq:powerlaw}.
\end{corollary}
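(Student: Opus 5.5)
The plan is to bound $\bar{I}_{table}(d)$ from below by $I^{same}$ \emph{uniformly in the residue} $r := d \bmod m$, not only at $r=0$. Once $\bar{I}_{table}(d) \ge I^{same}$ holds for all sufficiently large $d$, the $\liminf$ follows immediately. I would read $\bar{I}_{table}(d)$ in the same pooled sense as Definition~\ref{def:same_col_mi}: the starting column is randomized as $J \sim \text{Uniform}([m])$, which is what $t$ ranging over $\mathcal{S}_d$ induces once boundary effects are removed. The positive constant $I^{same}$ in Theorem~\ref{thm:main} only makes sense under this reading.

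First, fix $m$ and let $n \to \infty$. For a lag $d = qm + r$ with $0 \le r < m$, I would split $\mathcal{S}_d$ by the column of $t$. By (A1) and Lemma~\ref{lem:same_col}(ii), the term $I(W_t;W_{t+d})$ depends only on the pair of columns $(\text{col}(t), \text{col}(t+d))$. It does not depend on the row gap, since $q \ge 1$ once $d \ge m$. Hence $\bar{I}_{table}(d)$ is an average over $J$ of a quantity $I_r := I(T_{i,J}; T_{k,\sigma_r(J)})$ with $i \neq k$. Here $\sigma_r$ is the cyclic shift of column indices, adjusted for row wrap-around, and it is a bijection on $[m]$. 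The fraction of $t$ near the table boundary goes to $0$ as $n\to\infty$, so $\bar{I}_{table}(d) \to I_r$ along each residue class. It therefore suffices to prove $\min_r I_r \ge I^{same}$. The case $r=0$ is Theorem~\ref{thm:main}.

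Second, for $r \neq 0$, I would argue through the column-identity channel. By (A2), the column label $J$ is partially recoverable from the token $T_{i,J}$. Because $\sigma_r$ is a deterministic bijection, the label $\sigma_r(J)$ carries exactly the same information about $J$ as $J$ itself. I would then write $I_r$ via the chain rule as $I(T_{i,J}; J') - I(T_{i,J}; J' \mid T_{k,\sigma_r(J)}) + I(T_{i,J};T_{k,\sigma_r(J)}\mid J')$, with $J' = \sigma_r(J)$, and compare it term by term with the same decomposition of $I^{same}$. Proposition~\ref{prop:isame_characterization} identifies the column-identity part of $I^{same}$ with the cross-column variance $\sum_a \text{Var}_j[P_j(a)]$. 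The goal is to show that the analogous variance for $I_r$, computed through the shift $\sigma_r$, is at least as large.

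Third, I would combine the residue classes. There are finitely many of them ($m$), each converges to $I_r \ge I^{same}$, and so $\liminf_{d\to\infty}\bar{I}_{table}(d) \ge \min_r I_r \ge I^{same}$. The strict positivity $I^{same} > 0$ is Lemma~\ref{lem:same_col}(i).

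The main obstacle is the second step, the inequality $I_r \ge I^{same}$ for $r \neq 0$. At the same column, the two tokens share both the column-identity information and any intra-column dependence. At shifted columns, the intra-column term $I(\cdot;\cdot\mid J')$ may vanish. Moreover, the identity-channel term compares $P_J$ with $P_{\sigma_r(J)}$ rather than with $P_J$ itself, so by a Cauchy--Schwarz/rearrangement-type effect it is typically \emph{smaller*}, not larger. I would first attempt the inequality in the conditionally independent case, where both $I_r$ and $I^{same}$ reduce to functionals of the column mixture. If the rearrangement effect goes the wrong way, the fallback is to weaken the conclusion. One option is to replace $I^{same}$ by $\min_r I_r$, which is still positive under (A2) whenever the shifted columns are distinguishable. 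The other is to restrict the $\liminf$ to $d \equiv 0 \pmod m$. I would state explicitly which of these the argument actually delivers.
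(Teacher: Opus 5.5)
\paragraph{Where the proposal breaks.} Step 2 cannot be closed. The inequality $I_r \ge I^{same}$ for $r \neq 0$ is false in general, and the corollary as stated does not follow from (A1)--(A2). Work in the within-column-independent case. Put $D_j = P_j - Q$ and $R_r(a,b) = \frac{1}{m}\sum_j P_j(a)P_{\sigma_r(j)}(b)$. Since $\sigma_r$ is a bijection, both marginals of $R_r$ equal $Q$, and
\[
R_r - Q\otimes Q = \frac{1}{m}\sum_{j=1}^m D_j \otimes D_{\sigma_r(j)} .
\]
For $r=0$ this is a nonzero positive semidefinite matrix, which is the whole reason $I^{same}>0$. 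For $r\neq 0$ it is a lagged cross-covariance with no sign or size constraint. So no chain-rule bookkeeping can yield $I_r\ge I^{same}$, and neither can an appeal to Proposition~\ref{prop:isame_characterization}, whose ``monotone in $\sigma^2$'' claim is only heuristic.

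Concretely, take $m=4$, binary cells independent within columns, with $P_j(0)=0.65,\,0.55,\,0.60,\,0.20$.
\begin{itemize}
\item (A1)--(A2) hold and $Q(0)=\tfrac12$.
\item For $r=2$, $R_2(0,0)=\tfrac14(0.39+0.11+0.39+0.11)=\tfrac14=Q(0)^2$. Hence $R_2=Q\otimes Q$ and $I_2=0$.
\item In the uniform-column limit of your Step 1, $\bar I_{table}(d)\to 0$ along $d\equiv 2\pmod 4$. So $\liminf_{d\to\infty}\bar I_{table}(d)=0<I^{same}$.
\end{itemize}
The same example shows your first fallback ($\min_r I_r>0$) needs a hypothesis beyond (A2).

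The direction is not even ``typically smaller.'' For $m=2$ with $P_1=(\tfrac12,\tfrac12)$ and $P_2=(0,1)$, one gets $I_1\approx 0.085>I^{same}\approx 0.051$ nats. For $m=3$ with $P_1\approx P_2\approx\delta_x$ and $P_3=\delta_y$, one gets $I_1\approx 0.17<I^{same}\approx 0.64$ nats.

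\paragraph{Comparison with the paper, and what to prove instead.} Your caution is well placed, and it exposes the flaw in the paper's own proof. The paper shows $\bar I_{table}(Km)=I^{same}$ for every $K$ (taking $n>K$) and concludes $\liminf_{d\to\infty}\bar I_{table}(d)\ge I^{same}$. But attaining a value along an unbounded set of lags yields only $\limsup_{d\to\infty}\bar I_{table}(d)\ge I^{same}$. The residues $r\neq 0$ you single out are exactly what that step skips; the paper itself remarks that off-period lags ``may be smaller.''

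What is provable is your second fallback: $\lim_{k\to\infty}\bar I_{table}(km)=I^{same}>0$, i.e.\ the $\liminf$ along $d\equiv 0\pmod m$. Hence $\limsup_{d\to\infty}\bar I_{table}(d)\ge I^{same}$. This weaker form is all that Corollary~\ref{cor:ratio} and Theorem~\ref{thm:eff_dist} actually use; the latter needs only arbitrarily large lags with $\bar I_{table}(d)\ge\tau$. So state and prove that version, and say explicitly that the unrestricted $\liminf$ claim fails.

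Your pooled reading of $\bar I_{table}$ is the right one. With fixed positions and within-column independence, every term $I(W_t;W_{t+d})$ vanishes, so even Theorem~\ref{thm:main} would fail under the literal reading.
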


\begin{corollary}[Asymptotic Dominance over Natural Language]
\label{cor:ratio}
Under the power-law assumption Eq.~\eqref{eq:powerlaw} for natural language and Assumptions (A1)--(A2) for tables:
\begin{equation}
\lim_{k \rightarrow \infty} \frac{\bar{I}_{table}(km)}{I_{text}(km)} = +\infty
\end{equation}
\end{corollary}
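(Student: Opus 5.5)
The plan is to derive the corollary directly from Theorem~\ref{thm:main} combined with the power-law assumption Eq.~\eqref{eq:powerlaw}. The numerator is constant along the subsequence of lags $d=km$, while the denominator vanishes polynomially. First, I fix $m$. Because the limit is taken over $k \to \infty$, the lag $km$ must stay admissible, so I work in the regime of Corollary~\ref{cor:liminf}: a sequence of tables with $n \to \infty$ rows and fixed $m$ columns, or equivalently, for each $k$ any table with $n \ge k+1$ rows. Theorem~\ref{thm:main} then gives $\bar{I}_{table}(km) = I^{same}$ for every $k \le n-1$. Since $I^{same}$ does not depend on $n$ or $k$ (Lemma~\ref{lem:same_col}, item (ii)), the numerator equals the same positive constant $I^{same}>0$ for every $k$ in the limit.

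Next, I substitute $d=km$ into Eq.~\eqref{eq:powerlaw}. I read ``$\sim$'' as asymptotic equivalence, $I_{text}(d)/(C d^{-\alpha}) \to 1$ with $C>0$ and $\alpha>0$; a weaker reading is discussed at the end. This gives $I_{text}(km) \sim C m^{-\alpha} k^{-\alpha}$. For sufficiently large $k$ we then have $I_{text}(km) \le 2C m^{-\alpha} k^{-\alpha}$, and $I_{text}(km) > 0$ for those $k$, so the ratio is well defined. Combining the two facts,
\begin{equation*}
\frac{\bar{I}_{table}(km)}{I_{text}(km)} \;\ge\; \frac{I^{same}\, m^{\alpha}}{2C}\, k^{\alpha} \;\longrightarrow\; +\infty \quad (k\to\infty),
\end{equation*}
because $\alpha>0$ (for example, $\alpha \approx 0.5$). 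This also yields the explicit rate $\Theta(k^{\alpha})$.

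I do not expect any real obstacle here. The only care needed is bookkeeping, in two places. The first is making the admissibility of $k$ precise ($k \le n-1$, so $n$ must grow). The second is the interpretation of ``$\sim$'' in Eq.~\eqref{eq:powerlaw}. If it is meant only as an upper bound $I_{text}(d)=O(d^{-\alpha})$, the same argument goes through verbatim. Positivity of $I_{text}$ is not even needed in that case: a vanishing denominator with a positive numerator still makes the ratio infinite in the extended sense. Positivity of the numerator is supplied by Lemma~\ref{lem:same_col}, item (i), under (A1)--(A2).
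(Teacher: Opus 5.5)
Your proof is correct and follows the paper's own argument: the numerator is the constant $I^{same}>0$ from Theorem~\ref{thm:main}, while $I_{text}(km)\sim C(km)^{-\alpha}\to 0$ by Eq.~\eqref{eq:powerlaw}, so the ratio diverges (at rate $k^{\alpha}$). Your extra bookkeeping makes precise two points the paper's proof glosses over. You let $n$ grow so that $k\le n-1$ remains admissible, where the paper writes ``for all $k\in\mathbb{Z}^+$'' for a fixed table, and you also pin down the meaning of ``$\sim$''.
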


\subsection{Effective Dependency Distance}

\begin{definition}[Effective Dependency Distance~\cite{liu2008dependency}]
\label{def:eff_dist}
Given threshold $\tau > 0$: $D_{eff}(\tau) = \sup\{d : I(d) \geq \tau\}$.
\end{definition}

\begin{theorem}[Effective Distance Comparison]
\label{thm:eff_dist}
Under the power-law assumption Eq.~\eqref{eq:powerlaw} for natural language and Assumptions (A1)--(A2) for tables, for any $\tau \in (0, I^{same})$:
\begin{equation}
D_{eff}^{table}(\tau) = +\infty, \quad \text{while} \quad D_{eff}^{text}(\tau) < +\infty
\end{equation}
\end{theorem}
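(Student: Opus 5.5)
We handle the two halves of Theorem~\ref{thm:eff_dist} separately, applying Definition~\ref{def:eff_dist} with $I = \bar{I}_{table}$ for tables and $I = I_{text}$ for natural language.

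\textbf{Table side.} We read $D_{eff}^{table}(\tau)$ along a sequence of tables with fixed $m$ and $n \to \infty$, as in Corollary~\ref{cor:liminf}. This convention is needed because for any finite $n$ the lag set is bounded by $L-m-1$, so the supremum is trivially finite. Fix $\tau \in (0, I^{same})$. By Theorem~\ref{thm:main}, $\bar{I}_{table}(km) = I^{same} > \tau$ for every $k \in \{1,\dots,n-1\}$. The set $\{d : \bar{I}_{table}(d) \ge \tau\}$ therefore contains the arithmetic progression $\{m, 2m, \dots, (n-1)m\}$, so its supremum is at least $(n-1)m$. Letting $n \to \infty$ gives an unbounded set, i.e.\ $D_{eff}^{table}(\tau) = +\infty$. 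Equivalently, one can argue from Corollary~\ref{cor:liminf}: $\liminf_{d\to\infty} \bar{I}_{table}(d) \ge I^{same} > \tau$ forces $\bar{I}_{table}(d) \ge \tau$ for all sufficiently large $d$. However, it is cleaner to rely only on the periodic peaks, because at non-multiples of $m$ the average MI may be smaller, as the remark after Theorem~\ref{thm:main} notes.

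\textbf{Text side.} Under Eq.~\eqref{eq:powerlaw} we have $I_{text}(d) \to 0$. We make this precise by reading $\sim$ as asymptotic equivalence, $I_{text}(d)/(C d^{-\alpha}) \to 1$ with $\alpha > 0$. Then there is $d_0$ such that $I_{text}(d) \le 2C d^{-\alpha}$ for all $d \ge d_0$. Set $d_1 = \max\{d_0, (2C/\tau)^{1/\alpha}\}$. For every $d > d_1$ we get $I_{text}(d) < \tau$, so $\{d : I_{text}(d) \ge \tau\} \subseteq [1, d_1]$ and $D_{eff}^{text}(\tau) \le d_1 < \infty$. If this set is empty, the supremum is conventionally $0$ or $-\infty$, which is still finite in the sense of the statement.

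\textbf{Main obstacle.} The analytic content is trivial. The real issue is making the meaning of $D_{eff}^{table} = +\infty$ rigorous, since a single finite table has bounded lags. We resolve it by fixing $m$, taking the family of tables with $n \to \infty$ (the same setting as Corollary~\ref{cor:liminf}), and checking that the same-column MI $I^{same}$ does not depend on $n$. By Lemma~\ref{lem:same_col}, $I^{same}$ depends only on the column distributions $P_j$, so this holds. The threshold $\tau < I^{same}$ is then uniform across the family, and the set of qualifying lags is unbounded.
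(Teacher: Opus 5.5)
Your proof is correct and follows the paper's argument. On the table side, the paper likewise takes, for arbitrary $K$, a table with $n > K$ rows and uses the periodic peak $\bar{I}_{table}(Km) = I^{same} > \tau$; you simply make the fixed-$m$, $n \to \infty$ family explicit. On the text side, it uses power-law decay to push $I_{text}$ below $\tau$ beyond a finite lag, as you do.

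Your text-side step is in fact slightly more careful than the paper's. The paper sets $d^* = (C/\tau)^{1/\alpha}$ as if $I_{text}(d) \le C d^{-\alpha}$ held exactly, whereas you read $\sim$ as asymptotic equivalence and correctly add the $d_0$ and the factor $2$. You were also right not to lean on the liminf corollary: its proof in the paper only controls the lags $d = km$, so it is not really an equivalent route.
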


This result shows that for any threshold $\tau < I^{same}$, tabular dependencies exceed $\tau$ at infinitely many lags (specifically, at all $d = km$), whereas natural language dependencies eventually fall below $\tau$ permanently.

These results indicate that structured table data with periodic non-vanishing structures exhibits strong potential for long-context reasoning.

%% file: src/methods.tex
\section{TableLong: a Scalable and Verifiable Pipeline for Long-context Reasoning}

In this section, we propose a simple yet scalable pipeline, namely TableLong, to construct high-quality, diverse, and verifiable structured table data, tailored for RL. 

\subsection{Overview of the Construction Pipeline}
As illustrated in Figure~\ref{fig:pipeline}, our framework achieves a pipeline to transform hybrid tabular sources into high-quality RL tasks. The pipeline proceeds in three stages:

\begin{itemize}
    \item \textbf{Environment Initialization:} We first aggregate a diverse repository of tables from both open-source datasets and documents, parsing them into executable SQL environments.
    
    \item \textbf{Sample Construction:} Leveraging these environments, we employ LLMs to generate diverse SQL queries and corresponding natural language questions, and subsequently execute the SQL queries to obtain verifiable answers. We construct each instance as a tuple: (Raw Table, Question, Answer).
    
    \item \textbf{Verification and Filtration:} We then apply a consistency-based filtration mechanism to prune noise ($P=0$) and triviality ($P=1$), ensuring the selected tasks offer optimal training value.
\end{itemize}

\begin{figure}[h]
    \centering
    \includegraphics[width=0.99\linewidth]{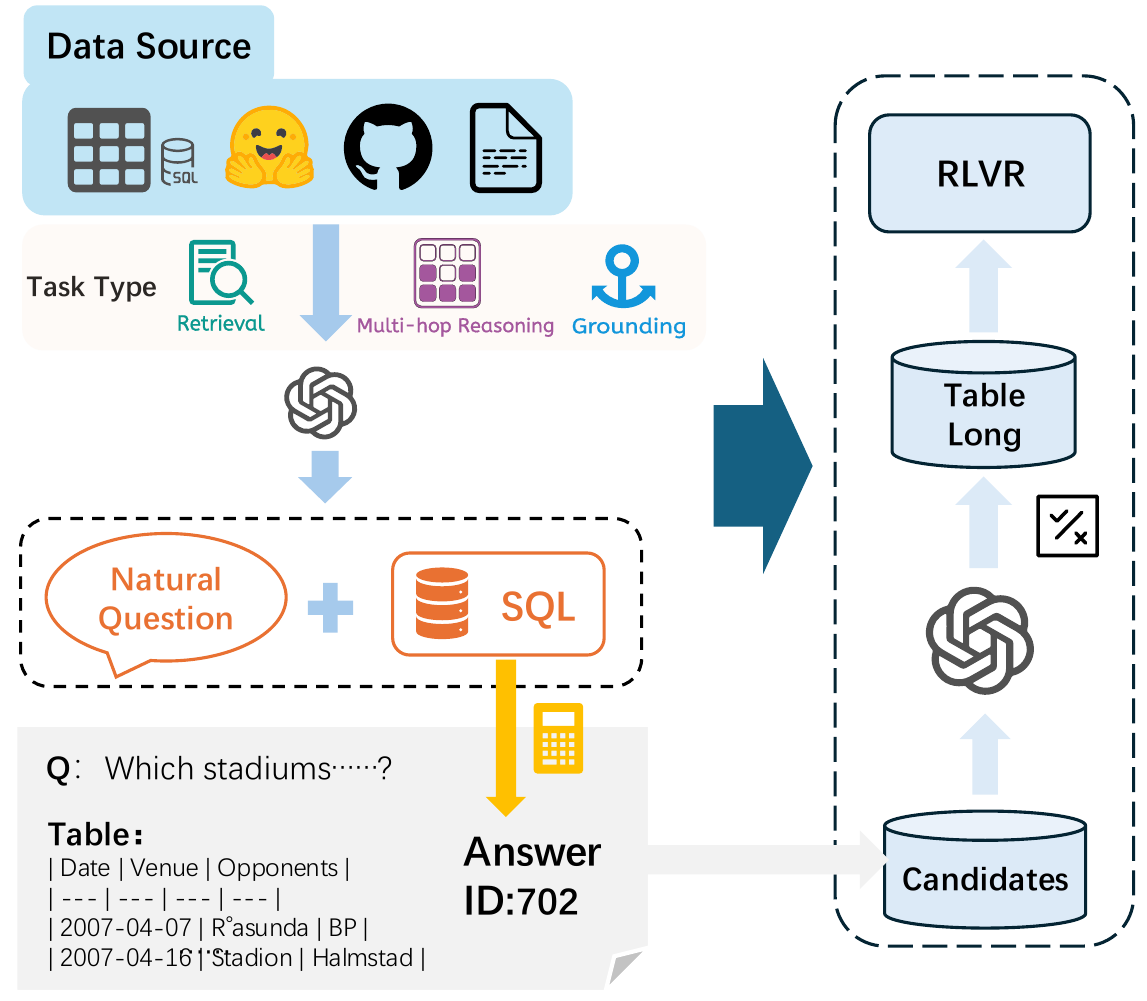}
    \caption{Overview of TableLong: An end-to-end table data construction pipeline for long-context reasoning.}
    \label{fig:pipeline}
\end{figure}

\subsection{Environment Setup}
To ensure the diversity and realism of the reasoning environment, we construct a hybrid repository by aggregating over 10,000 real-world tables from established datasets (e.g., BIRD \cite{li2023can}, CoSQL \cite{yu2019cosql}, Spider \cite{yu2018spider}) and extracting scalable table data from information-dense documents via LLMs.

Following rigorous cleaning and sampling, the resulting corpus exhibits high diversity in both content and structure. Thematically, the data spans a broad spectrum of domains, including Finance, Sports, Healthcare, and Science. Linguistically, it covers both English and Chinese. Crucially, the corpus is tailored for long-context reasoning: context lengths range from a few hundred up to 32k tokens, with an average cell density of 5.2 tokens. Furthermore, to support complex structural grounding, the number of tables per instance ranges from single to 30 tables, creating a challenging multi-table environment. Finally, all collected tables are parsed into a unified SQLite database to support executable SQL queries.
\subsection{Sample Construction}

We generate diverse SQL queries by scalable prompt constraints, targeting \textbf{three distinct dimensions} of table tasks for long-context reasoning:

\begin{itemize}
    \item \textbf{Precise Retrieval:} Targeting atomic information locating. We constrain the generation to \texttt{SELECT} queries with specific \texttt{WHERE} clauses, requiring the model to filter and extract precise values from contexts.
    
    \item \textbf{Multi-hop Reasoning:} Targeting long-range multi-hop reasoning. We constrain the generation to column-wise aggregation and calculation (e.g., \texttt{SUM}, \texttt{AVG}), requiring the model to attend to non-adjacent tokens separated by substantial distances due to table linearization.
    
    \item \textbf{Grounding:} Targeting query-to-table association. We constrain the generation to multi-table operations (e.g., \texttt{JOIN}), requiring the model to distinguish specific tables distributed across the long context as ``grounding''.
\end{itemize}

Upon generating diverse SQL queries and corresponding natural language questions, we execute the queries to obtain verifiable answers. Finally, we construct each training sample as a triplet: (Raw Table, Question, Answer).

\subsection{Verification and Filtration}
To ensure appropriate task difficulty, we implement a consistency-based filtration mechanism. For each instance $(x, y_{gt})$, we generate $N$ candidate responses using the model and evaluate their correctness against the ground truth $y_{gt}$. Based on the calculated pass rate ($P$), we apply a \textbf{dual-sided filter}:

\begin{itemize}
    \item \textbf{Discard $P=0$:} Eliminates tasks that are ambiguous, erroneous, or exceed reasoning capacity.
    \item \textbf{Discard $P=1$:} Prunes trivial tasks that the model has already mastered, ensuring high training efficiency.
    \item \textbf{Retain $0 < P < 1$:} Preserves non-trivial tasks within the effective learning boundary, offering optimal gradient signals for RL training.
\end{itemize}

%% file: src/results.tex
\section{Experiments}
\label{sec:experiments}

\subsection{Setup}
\begin{figure}[h]
\centering
\includegraphics[width=\linewidth]{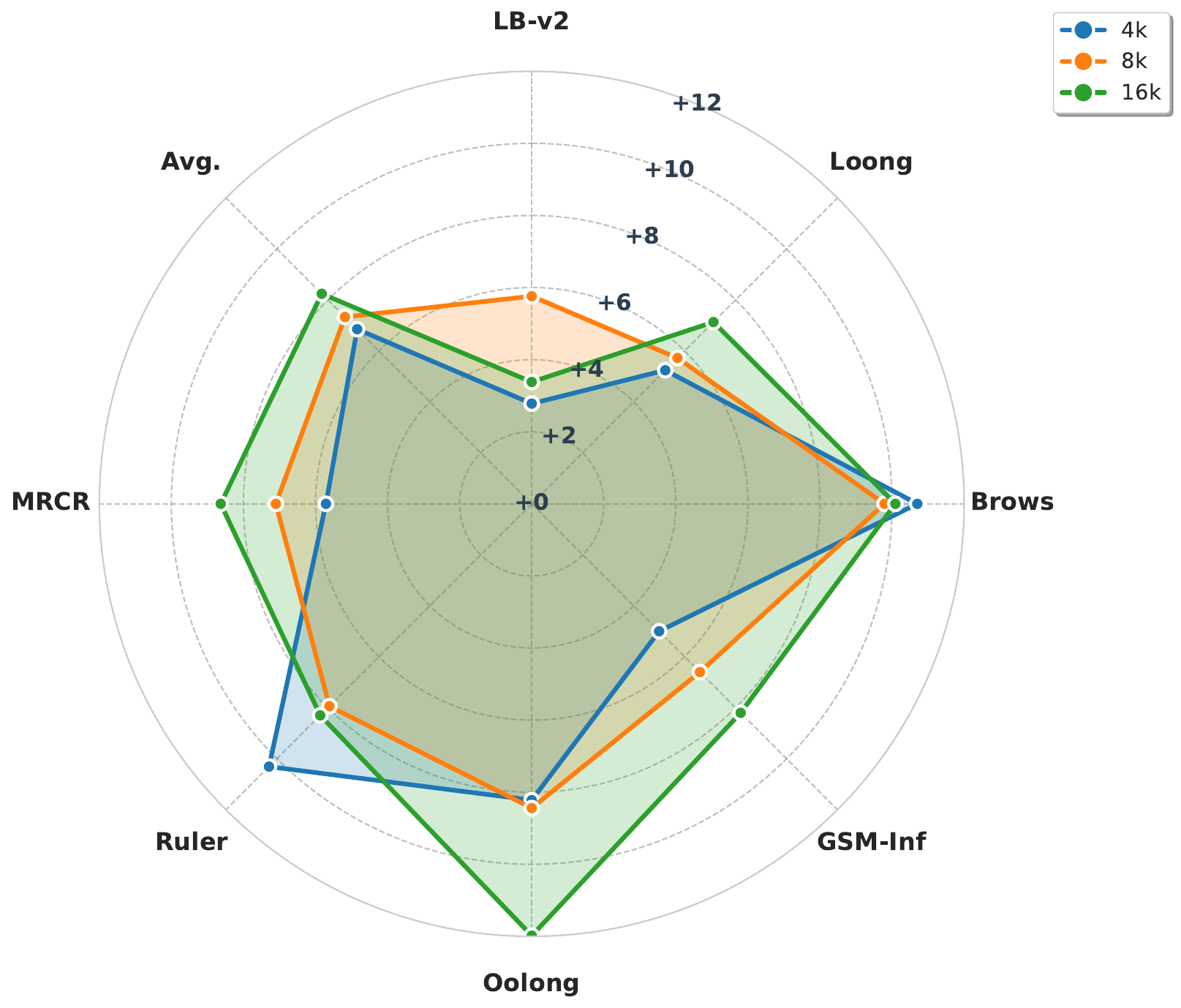}
\caption{The radar chart of long-context reasoning benchmarks for DS-R1-Distill-32B trained with varying length.}
\label{fig:length_main_radar}
\end{figure} 

\definecolor{rowgray}{gray}{0.92}
\begin{table*}[t]
    \centering
    \small
\caption{\textbf{Performance Comparison on Long-Context and Reasoning Benchmarks.} 
We compare base models against their variants optimized via Reinforcement Learning on our dataset (denoted as \textbf{+ Ours}). 
The best results within each model family are highlighted in \textbf{bold}. 
\textsc{LB-v2}: LongBench v2; \textsc{Brows}: BrowsCompLong; \textsc{GSM-Inf}: GSM-Infinite; \textsc{Oolong}: Oolong-Synth. 
For model names, Qwen2.5-32B-Inst denotes Qwen2.5-32B-Instruct, and DS-R1-Distill refers to Deepseek-R1-Distill-Qwen.}
    \label{tab:main_results}
    
    \setlength{\tabcolsep}{5pt}
    \begin{tabular}{lcccccccc}
        \toprule
        \textbf{Model} & \textbf{\textsc{LB-v2}} & \textbf{\textsc{Loong}} & \textbf{\textsc{Brows}} & \textbf{\textsc{GSM-Inf}} & \textbf{\textsc{Oolong}} & \textbf{\textsc{Ruler}} & \textbf{\textsc{MRCR}} & \textbf{Avg.} \\
        \midrule
        
        \multicolumn{9}{l}{\textit{\textbf{Reference Flagships \& Baselines}}} \\
        \midrule
        Gemini-3.0-pro & \textbf{69.38} & \textbf{65.43} & \textbf{88.07} & \textbf{87.06} & \textbf{78.41} & \textbf{83.01} & \textbf{75.30} & \textbf{78.09} \\
        Deepeek-v3.1  & 52.88 & 50.55 & 56.27 & 47.40 & 51.90 & 42.97 & 46.62 & 49.80 \\
        Qwen-Long-L1   & 43.74 & 44.68 & 69.93 & 9.00  & 27.37 & 47.70 & 27.70 & 38.59 \\
        
        \midrule
        \multicolumn{9}{l}{\textit{\textbf{Main Results: Effectiveness of Our Method}}} \\
        \midrule
        
        Qwen3-32B & 46.33 & 39.96 & 59.33 & 12.22 & 31.66 & 50.86 & 42.45 & 40.40 \\
        \rowcolor{rowgray}
        Qwen3-32B \textbf{+ Ours} & \textbf{46.92} & \textbf{43.10} & \textbf{65.44} & \textbf{23.40} & \textbf{42.31} & \textbf{53.55} & \textbf{42.66} & \textbf{45.34} \\
        \addlinespace[4pt]
        
        Qwen2.5-32B-Inst & 38.17 & 33.22 & 52.56 & 9.60 & \textbf{41.03} & 45.25 & 33.19 & 36.15 \\
        \rowcolor{rowgray}
        Qwen2.5-32B-Inst \textbf{+ Ours} & \textbf{43.54} & \textbf{38.18} & \textbf{60.55} & \textbf{13.00} & 35.50 & \textbf{62.65} & \textbf{33.19} & \textbf{40.94} \\
        \addlinespace[4pt]
        
        DS-R1-Distill-14B & 35.19 & 25.07 & 51.38 & 9.00 & 42.05 & 61.28 & 29.02 & 36.14 \\
        \rowcolor{rowgray}
        DS-R1-Distill-14B \textbf{+ Ours} & \textbf{38.97} & \textbf{35.44} & \textbf{74.31} & \textbf{9.60} & \textbf{45.95} & \textbf{80.72} & \textbf{30.48} & \textbf{45.07} \\
        \addlinespace[4pt]
        
        DS-R1-Distill-32B & 42.35 & 38.17 & 64.22 & 6.60 & 39.43 & 58.11 & 31.94 & 40.12 \\
        \rowcolor{rowgray}
        DS-R1-Distill-32B \textbf{+ Ours} & \textbf{45.73} & \textbf{45.30} & \textbf{74.31} & \textbf{14.80} & \textbf{51.41} & \textbf{66.41} & \textbf{40.57} & \textbf{48.36} \\
        
        \bottomrule
    \end{tabular}
\end{table*}

\begin{figure*}[!h]
\begin{subfigure}{0.49\textwidth}
    \centering
    \includegraphics[width=\linewidth]{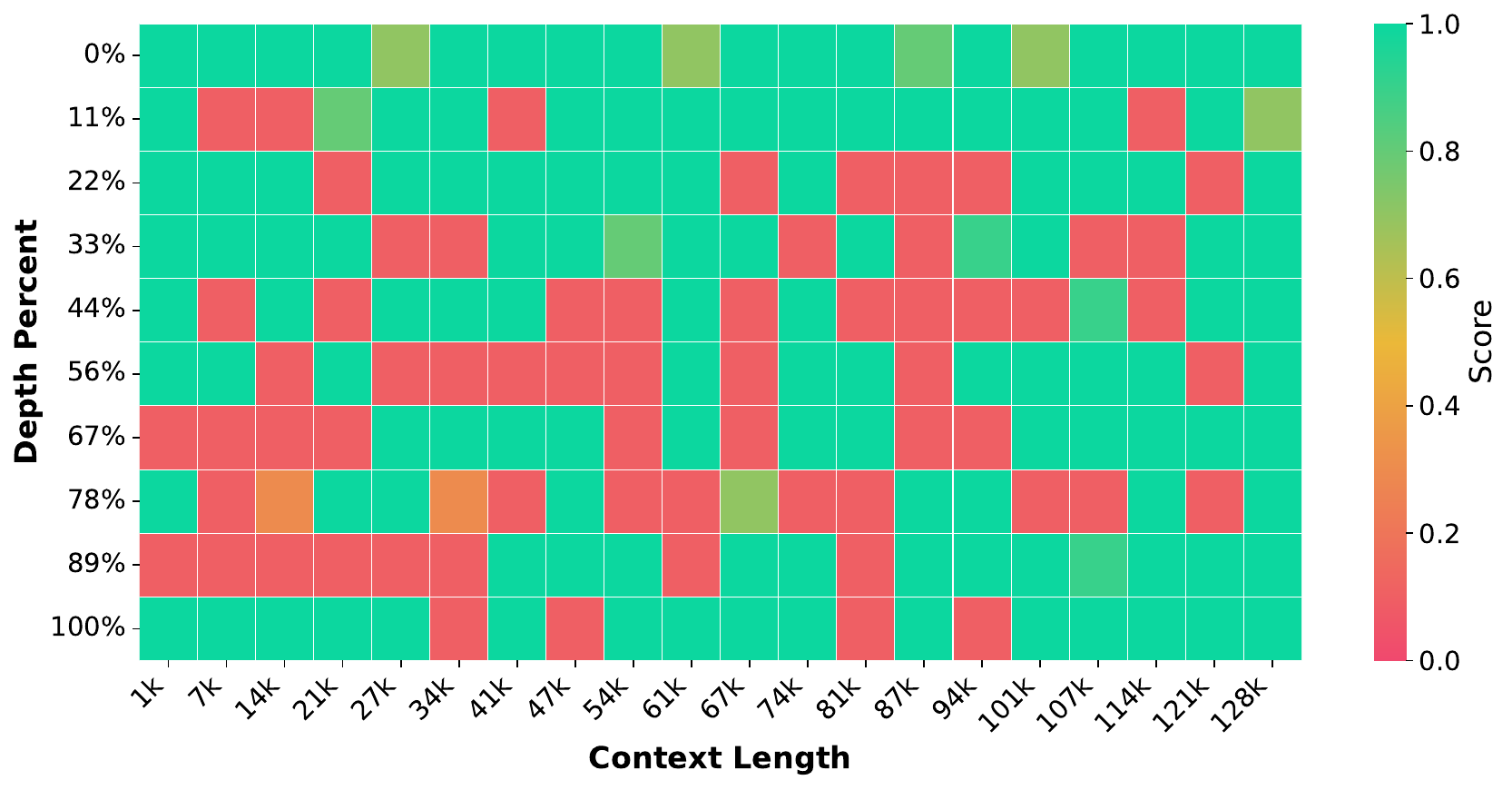}
    \subcaption{DS-R1-Distill-14B}
\end{subfigure}
\hfill
\begin{subfigure}{0.49\textwidth}
    \centering
    \includegraphics[width=\linewidth]{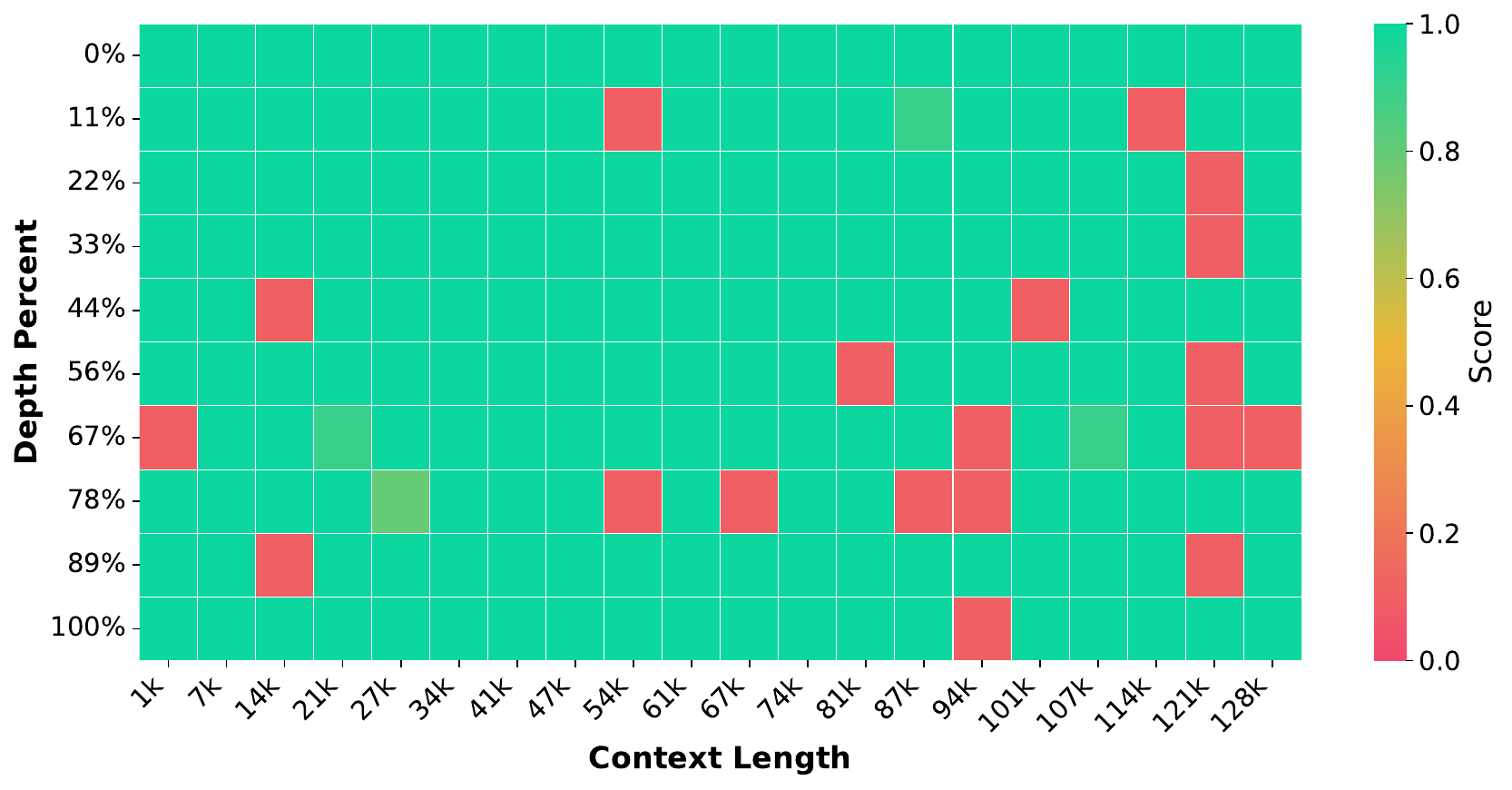}
    \subcaption{DS-R1-Distill-14B + Ours}
\end{subfigure}
\hfill
\begin{subfigure}{0.49\textwidth}
    \centering
    \includegraphics[width=\linewidth]{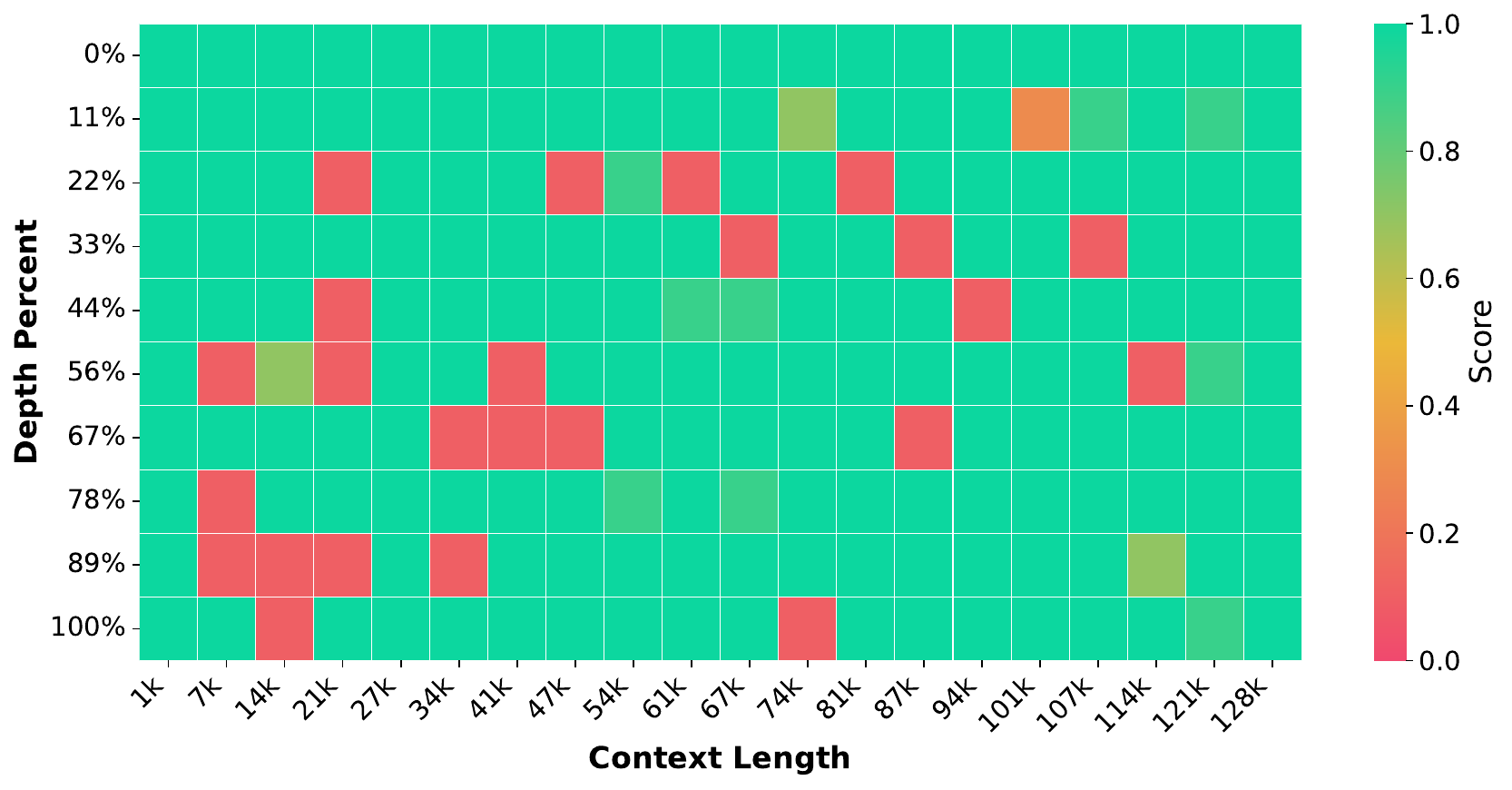}
    \subcaption{DS-R1-Distill-32B}
\end{subfigure}
\hfill
\begin{subfigure}{0.49\textwidth}
    \centering
    \includegraphics[width=\linewidth]{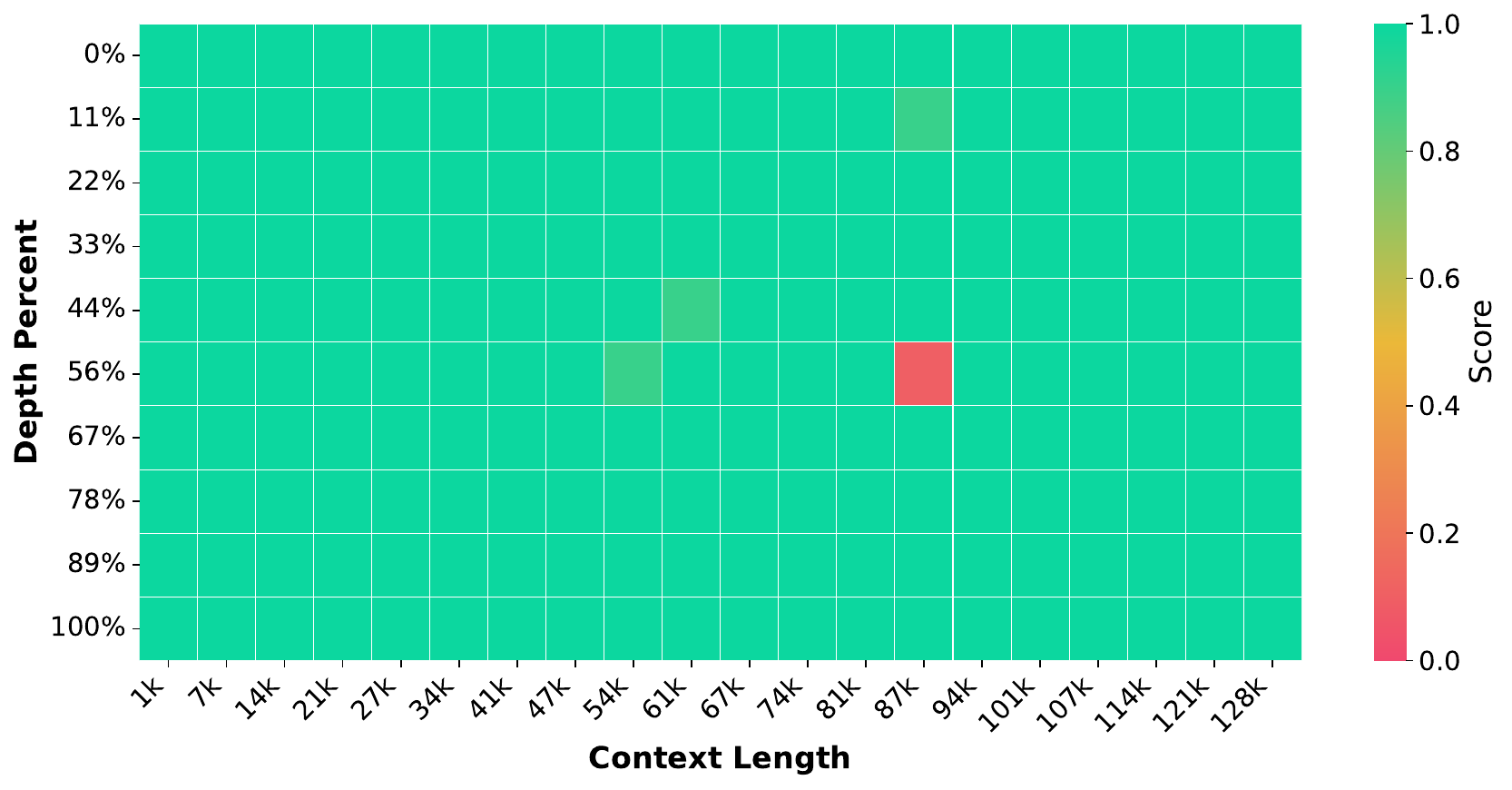}
    \subcaption{DS-R1-Distill-32B + Ours}
\end{subfigure}
\caption{Needle in a Haystack retrieval across document depths. Our approach significantly enhances long-context robustness, boosting the 14B model's accuracy from 69.30\% to 91.20\% and the 32B model's accuracy from 87.95\% to 99.40\%, achieving near-perfect performance.}
\label{fig:retrival_exps}
 \end{figure*} 

\begin{figure*}[!h]
\begin{subfigure}{0.49\textwidth}
    \centering
    \includegraphics[width=\linewidth]{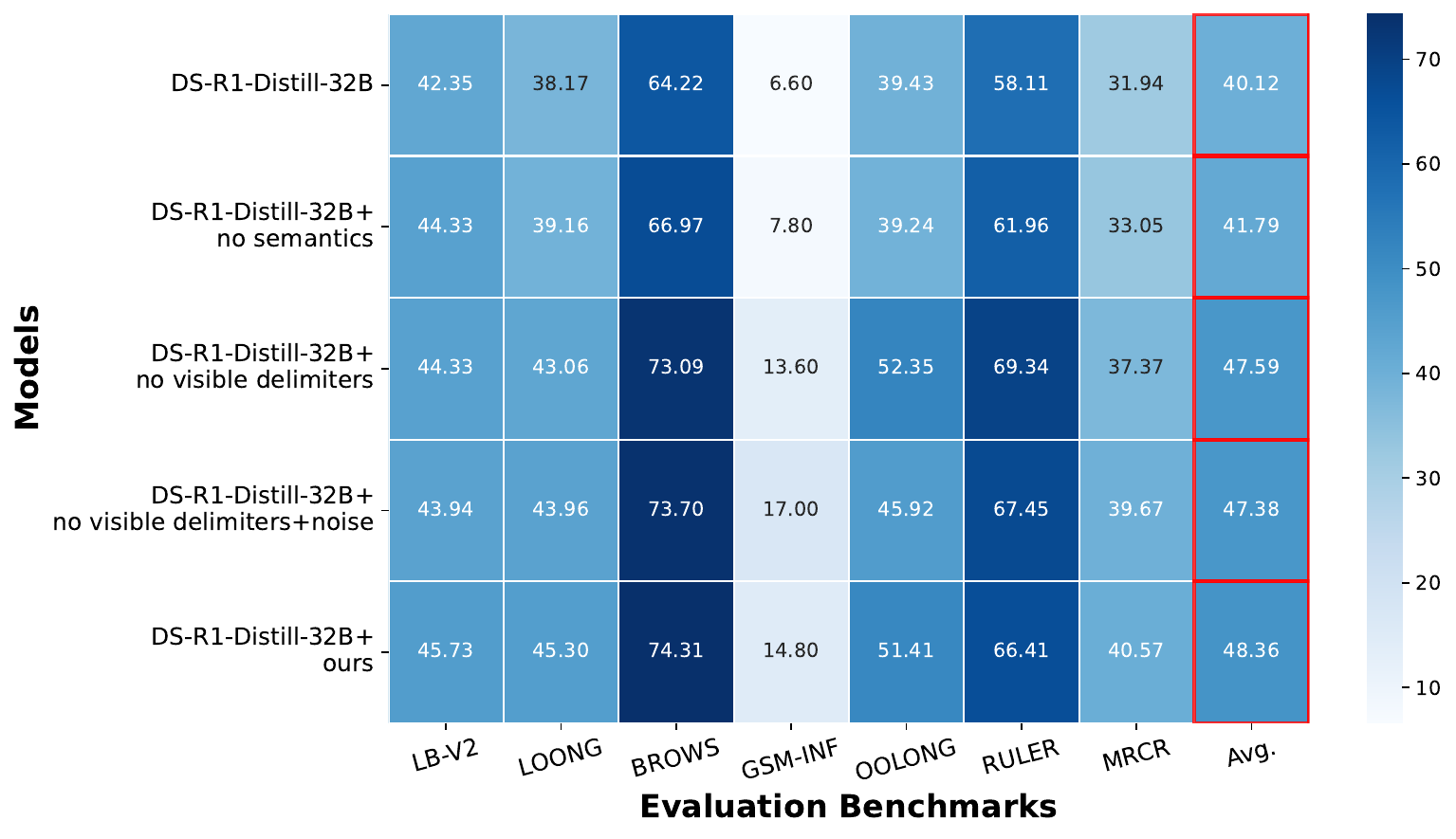}
    \caption{Impact of semantics, delimiters, and noise.}
\end{subfigure}
\hfill
\begin{subfigure}{0.49\textwidth}
    \centering
    \includegraphics[width=\linewidth]{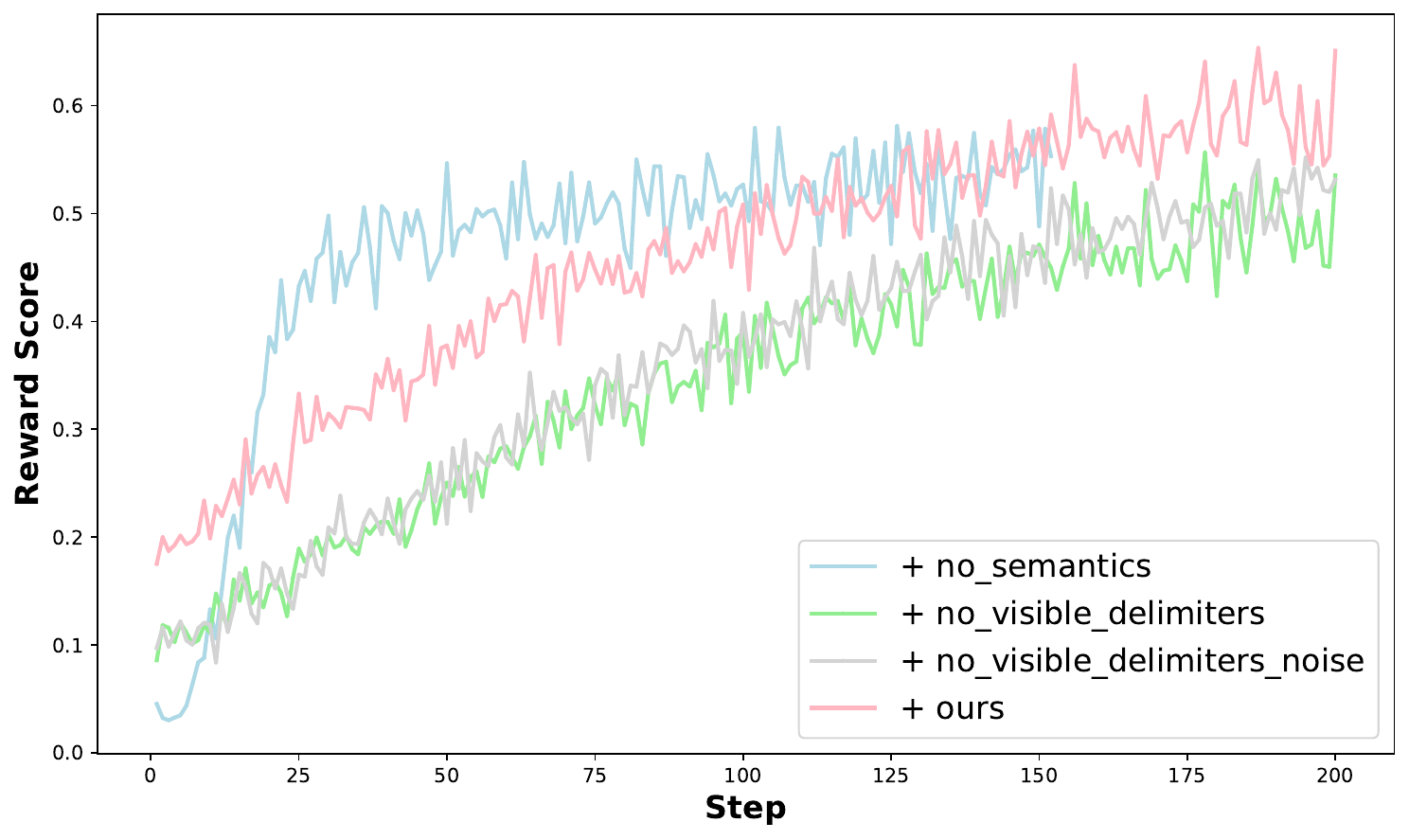}
    \subcaption{RL reward curves: Convergence and stability analysis.}
\end{subfigure}
\caption{\textbf{Decomposition experiments for DS-R1-Distill-32B.} (a) While structure alone (``no semantics'') boosts baseline performance (+1.67\%), semantics remain essential for peak results. Removing delimiters or adding \textbf{noise} yields negligible drops, confirming the primacy of intrinsic structure. (b) Models with ``no semantics'' suffer premature convergence, whereas ``no visible delimiters'' settings recover from low initial rewards.}
    \label{fig:ablation_study}
\label{fig:structure_exps}
\end{figure*} 

\textbf{Benchmarks.} To evaluate OOD universality and scalability, we conduct evaluation across diverse long-context tasks. We utilize \textbf{LongBench-v2} \cite{bai2025longbench} and \textbf{Loong} \cite{wang2024leave} for holistic real-world assessment. Furthermore, we include specific diagnostic datasets: \textbf{Browscomplong} \cite{BrowseComplong} and \textbf{MRCR} \cite{MRCR} for retrieval; the \textbf{Ruler} \cite{hsieh2024ruler} subset for variable tracking; and \textbf{GSM-Infinite} \cite{zhou2025gsm} alongside \textbf{Oolong} \cite{bertsch2025oolong} for numerical reasoning. To assess out-of-distribution (OOD) generalization, we further evaluate on \textbf{LiveCodeBench}\cite{jain2024livecodebench}, \textbf{AIME 2025}\cite{balunovic_srimatharena_2025}, \textbf{GPQA-Diamond} \cite{rein2024gpqa}, and \textbf{MultiChallenge} \cite{deshpande2025multichallenge}. Details of benchmarks are provided in Appendix~\ref{app:benchmark_details}.

\textbf{Baselines.} We take four distinct backbones as our baselines: Qwen2.5-32B-Instruct \cite{qwen2.5}, Qwen3-32B \cite{qwen3technicalreport}, Deepseek-R1-Distill-Qwen-14B, and Deepseek-R1-Distill-Qwen-32B \cite{deepseekai2025deepseekr1incentivizingreasoningcapability}, and apply RL training on them using our TableLong data. Additionally, we include Gemini-3-Pro, Deepseek-v3.1 \cite{deepseekai2024deepseekv3technicalreport}and Qwen-Long-L1 \cite{wan2025qwenlong} as external baselines. Among them, Qwen-Long-L1, also based on Deepseek-R1-Distill-Qwen-32B and trained for long-context reasoning via RL, serves as a relevant reference for same-backbone performance.

\textbf{Training Details.} Experiments are conducted on 64 H20 GPUs using the Verl \cite{sheng2024hybridflow} framework based on the GRPO \cite{shao2024deepseekmath} algorithm. Details of training configurations are in Appendix~\ref{app:training_details}.

\subsection{Main Results}
In this subsection, we evaluate the effectiveness of our TableLong in enhancing the general long-context reasoning capabilities of LLMs across multiple OOD general long-context reasoning benchmarks using several open-source backbone models.

\emph{\textbf{(a) Scalable table data is effective for long-context reasoning.}} Specifically, as shown in Table~\ref{tab:main_results}, for Deepseek-R1-Distill-Qwen-32B, our TableLong achieves a remarkable improvement, with an average accuracy gain of 8.24\% across seven OOD benchmarks of long contexts. Similarly, for other backbones (Qwen3-32B, Qwen2.5-32B-Instruct, and Deepseek-R1-Distill-Qwen-14B), the improvements are 4.94\%, 4.79\%, and 8.93\%, respectively. Deepseek-R1-Distill-Qwen-32B with our TableLong clearly outperforms Qwen-Long-L1 and approaches the performance of Deepseek-v3.1. This demonstrates the robustness of our TableLong in enhancing long-context reasoning capabilities.

\emph{\textbf{(b) Length Scalability: Train short, generalize long.}} Table data is inherently scalable in length. To investigate the effect of length scalability of table data on long-context reasoning, we conduct RL training with input lengths of 4k, 8k, and 16k. The results show consistent performance improvements across multiple long-context benchmarks as the length of structured table data scales up. Specifically, for Deepseek-R1-Distill-Qwen-32B, we observe relative gains of 0.48\% and 1.39\% at 8k and 16k, respectively, compared with the 4k experiment in Figure~\ref{tab:length_results}.
The 14B results are similar, with details in Appendix~\ref{app:detailed_results}.

Moreover, we investigate the impact of length scaling on accuracy across different benchmark length ranges in Appendix~\ref{app:scalability_length_range}. The results show that TableLong, when trained on sequences within 16k, generalizes effectively to contexts beyond 16k and even up to 128k, with performance further improving as the training length scales.

\emph{\textbf{(c) Significantly improved long-context retrieval.}} We adopt the Needle-in-a-Haystack benchmark to evaluate the gains of our TableLong in long-context retrieval, which measures a model's ability to locate a ``needle'' embedded at different depths within long documents. 

As shown in Figure~\ref{fig:retrival_exps}, compared with Deepseek-R1-Distill-Qwen-14B, our TableLong significantly improves long-context retrieval performance by 31.60\%, increasing the retrieval score from 69.30\% to 91.20\%. Moreover, compared with Deepseek-R1-Distill-Qwen-32B, the retrieval score increases by 13.02\%, from 87.95\% to 99.40\%, approaching near-perfect retrieval performance. These results indicate that structured table data, potentially benefiting from its periodic non-vanishing structural properties, can substantially enhance long-context retrieval for RL-based post-training.
\begin{table*}[t]
    \centering
    \caption{\textbf{Performance comparison across varying scales of table cells.} Bold values indicate the best performance within each backbone.}
    \label{tab:cell_count_results}
    \begin{tabular}{lcccccccc}
        \toprule
        \textbf{Model} & \textbf{\textsc{LB-v2}} & \textbf{\textsc{Loong}} & \textbf{\textsc{Brows}} & \textbf{\textsc{GSM-Inf}} & \textbf{\textsc{Oolong}} & \textbf{\textsc{Ruler}} & \textbf{\textsc{MRCR}} & \textbf{Avg.} \\
        \midrule
        
        DS-R1-Distill-32B  &  &  &  &  &  &  &  &  \\
        \quad$|-$ 0$\sim$30 & 44.33 & 42.87 & 71.87 & \textbf{18.00} & 40.29 & 69.30 & 37.44 & 46.30 \\
        \quad$|-$ 0$\sim$100 & 45.33 & 43.46 & 74.31 & 10.00 & 44.10 & \textbf{71.82} & 36.46 & 46.50 \\
        \quad$|-$ 0$\sim$300+ & \textbf{45.73} & \textbf{45.30} & \textbf{74.31} & 14.80 & \textbf{51.41} & 66.41 & \textbf{40.57} & \textbf{48.36} \\

        \midrule
        DS-R1-Distill-14B  &  &  &  &  &  &  &  &  \\
        \quad$|-$ 0$\sim$30 & 36.98 & 35.60 & 72.17 & 6.40 & 41.70 & \textbf{82.16} & 27.63 & 43.23 \\
        \quad$|-$ 0$\sim$100 & 37.38 & \textbf{36.45} & \textbf{74.31} & 6.40 & 34.47 & 80.44 & 29.16 & 42.65 \\
        \quad$|-$ 0$\sim$300+ & \textbf{38.97} & 35.44 & \textbf{74.31} & \textbf{9.60} & \textbf{45.95} & 80.72 & \textbf{30.48} & \textbf{45.07} \\
        \bottomrule
    \end{tabular}
\end{table*}

\begin{table*}[t]
    \centering
    \caption{\textbf{Performance comparison across varying scales of table count.} Bold values indicate the best performance within each backbone.}
    \label{tab:table_count_results}
    \begin{tabular}{lcccccccc}
        \toprule
        \textbf{Model} & \textbf{\textsc{LB-v2}} & \textbf{\textsc{Loong}} & \textbf{\textsc{Brows}} & \textbf{\textsc{GSM-Inf}} & \textbf{\textsc{Oolong}} & \textbf{\textsc{Ruler}} & \textbf{\textsc{MRCR}} & \textbf{Avg.} \\
        \midrule
        
        DS-R1-Distill-32B  &  &  &  &  &  &  &  &  \\
        \quad$|-$ 1 & 44.93 & 43.65 & 73.39 & \textbf{16.80} & 50.78 & 62.97 & 34.10 & 46.66 \\
        \quad$|-$ 1$\sim$5 & 45.33 & 44.27 & 73.37 & 15.00 & 49.63 & 64.37 & 37.79 & 47.11 \\
        \quad$|-$ 1$\sim$30 & \textbf{45.73} & \textbf{45.30} & \textbf{74.31} & 14.80 & \textbf{51.41} & \textbf{66.41} & \textbf{40.57} & \textbf{48.36} \\

        \midrule
        DS-R1-Distill-14B  &  &  &  &  &  &  &  &  \\
        \quad$|-$ 1 & 34.39 & \textbf{37.05} & 71.25 & 6.20 & 42.09 & 77.03 & \textbf{31.59} & 42.80 \\
        \quad$|-$ 1$\sim$5 & \textbf{39.56} & 36.24 & 72.78 & 6.80 & 41.38 & 79.08 & 30.48 & 43.76 \\
        \quad$|-$ 1$\sim$30 & 38.97 & 35.44 & \textbf{74.31} & \textbf{9.60} & \textbf{45.95} & \textbf{80.72} & 30.48 & \textbf{45.07} \\
        
        \bottomrule
    \end{tabular}
\end{table*}
\subsection{Decomposing Tabular Capabilities for Long-Context Reasoning}
In this subsection, we further decompose the capabilities of tabular data from three perspectives, including structural properties, multi-hop reasoning, and grounding. We also systematically analyze their contributions to the long-context reasoning capabilities of LLMs, yielding several inspiring insights.

\subsubsection{\textbf{structural property}}
To investigate the effect of the periodic non-vanishing structural properties inherent to table data, we design and conduct experiments from two perspectives: the semantics of table cell contents and the visible delimiter structures of tables, yielding the following insights:
\begin{tcolorbox}[myfinding, title=Insight 1]
The inherent structure of table data serves as a foundation for long-range dependencies, while semantic content provides the reasoning signal for complex long-context reasoning.
\end{tcolorbox}
Specifically, we construct ``no semantic'' table data (Figure~\ref{fig:nosematic}) and ``no visible delimiters'' table data (Figure~\ref{fig:delimiters}), with details provided in Appendix~\ref{app:pipelinedetails}.

Figure~\ref{fig:structure_exps}(a) shows that, relative to the baseline, ``no semantic'' with simple instruction prompting achieves an average performance improvement of 1.67\%, indicating that the enhancement in long-context reasoning mainly stems from the table's inherent organizational structure, namely its periodic non-vanishing structural properties.

Relative to ``Ours'', the ``no semantic'' with only simple instruction prompting converges rapidly (within approximately 40 steps) in Figure~\ref{fig:structure_exps}(b), leading to an average performance drop of 6.57\%. This indicates that both the semantic content of the table and the complexity of the instructions are also crucial for long-context reasoning.
\begin{tcolorbox}[myfinding, title=Insight 2]
The visible delimiter structures do not affect the inherent structural properties of tables during RL training.
\end{tcolorbox}
Figure~\ref{fig:structure_exps} shows that the ``no visible delimiters'' setting does not exhibit a significant performance drop compared to ``Ours'', with a reduction of about 0.77\%, and exhibits a similar asymptotic performance. in the RL reward curve. This indicates that visible delimiter structures do not affect the intrinsic periodic non-vanishing structural properties of tables, which are the key factor.  

Moreover, the lower initial reward and consistently inferior performance of ``no visible delimiters'' suggest that removing visible delimiters makes table data harder for LLMs to interpret during training.

We further randomly replace delimiters with noise text (illustrated in Figure~\ref{fig:noise}) and observe results that are largely consistent with the ``no visible delimiters'' setting. This further demonstrates the robustness of table data for RL training. 

\subsubsection{\textbf{multi-hop reasoning}}
\begin{tcolorbox}[myfinding, title=Insight 3]
Table linearization induces multi-hop reasoning that enhances long-context reasoning, and increasingly multi-hop patterns further strengthen this effect.
\end{tcolorbox}
We characterize the ``multi-hop'' reasoning behavior of LLMs by the number of table cells involved in instruction operations, particularly along the column dimension. Specifically, when a two-dimensional table is linearized into a one-dimensional token sequence $\phi(\mathcal{T})$, cells within the same column become widely separated, forcing LLMs to attend to non-adjacent tokens and perform ``multi-hop'' reasoning. Representative examples are shown in Figure~\ref{fig:multi-hop reasoning case}, and further details are in Appendix~\ref{app:pipelinedetails}.

As shown in Table~\ref{tab:cell_count_results}, the model maintains strong long-context reasoning performance even with a small number of involved table cells (46.30\% on average for the 0$\sim$30 range). Moreover, scaling up the number of involved cells further improves performance on OOD long-context reasoning benchmarks. These findings suggest that the ``multi-hop'' property induced by table linearization effectively enhances long-context reasoning, and that increasing task complexity and multi-hop behavior can further strengthen this capability.

\subsubsection{\textbf{grounding}}
\begin{tcolorbox}[myfinding, title=Insight 4]
Tables in $\phi(\mathcal{T})$ provide ``grounding'', and scaling up quantity strengthens these signals, guiding attention to key tokens and improving long-context reasoning.
\end{tcolorbox}
Similarly, the number of tables in the prompt reflects the ``grounding'' capability of LLMs. Specifically, multi-table instruction operations compel the model to distinguish tables distributed across long contexts and to \textbf{associate} natural language queries with specific tables, making these tables act as sources of ``grounding''.  

The results show that even a single table enables strong long-context reasoning in Table~\ref{tab:table_count_results}. As the number of tables scales up, the model achieves better performance on OOD long-context reasoning benchmarks, with gains of +0.45\% for 1$\sim$5 tables and +1.7\% for ``all'' tables compared to the single-table setting. This indicates that linearized tables in the token sequence $\phi(\mathcal{T})$ act as ``grounding'' elements that the model need to attend to. As their number scales up, the distributed grounding signals across $\phi(\mathcal{T})$ increase, forcing the model to learn to allocate attention to key tokens in the sequence, thereby further enhancing long-context reasoning capability.

\begin{table}[H]
    \centering
    \footnotesize
    \setlength{\tabcolsep}{1.8pt}
    \caption{\textbf{Generalization on OOD Benchmarks.} 
    We evaluate models on GPQA-Diamond (\textbf{GPQA}), AIME 2025 (\textbf{AIME}), MultiChallenge (\textbf{MC}), and LiveCodeBench (\textbf{LCB}). 
    \colorbox{rowgray}{Shaded rows} indicate our RL-finetuned models (\textbf{+ Ours}).
    Our method achieves significant gains, notably \textbf{+11.9\%} on LCB for the 32B model.}
    \label{tab:generalization}
    
    \begin{tabular}{lcccc}
        \toprule
        \textbf{Model} & \textbf{GPQA} & \textbf{AIME} & \textbf{MC} & \textbf{LCB} \\
        \midrule
        
        DS-R1-Distill-32B & 56.06 & 60.00 & 30.28 & 46.71 \\
        \rowcolor{rowgray}
        DS-R1-Distill-32B \textbf{+ Ours} & \textbf{63.64} & \textbf{70.00} & \textbf{32.97} & \textbf{58.68} \\
        \addlinespace[4pt]
        
        DS-R1-Distill-14B & 55.56 & 43.33 & 23.08 & 43.71 \\
        \rowcolor{rowgray}
        DS-R1-Distill-14B \textbf{+ Ours} & \textbf{59.60} & \textbf{46.67} & \textbf{26.86} & \textbf{45.51} \\
        
        \bottomrule
    \end{tabular}
\end{table}
\subsection{Generalization Results to Other Domains}
In this subsection, we evaluate the performance on out-of-domain (OOD) benchmarks from diverse fields, as shown in Table~\ref{tab:generalization}. Specifically, we consider GPQA-Diamond (science), AIME 2025 (math), MultiChallenge (multi-turn dialogue), and LiveCodeBench (code) to assess the generalization ability of TableLong across heterogeneous OOD domains. The results demonstrate that for Deepseek-R1-Distill-Qwen-32B, models trained with TableLong achieve substantial improvements of +7.58\%, +10.00\%, +2.69\%, and +11.97\% on these benchmarks, respectively. These significant gains indicate that TableLong strengthens their long-context retrieval and reasoning abilities, enabling effective generalization to a wide range of OOD domains.

%% file: src/related_work.tex
\section{Related Work}
\label{sec:related_work}
\textbf{Long-context Reasoning.} As real-world tasks become increasingly complex, long-context capability has become a crucial competency for large language models (LLMs). Early efforts on context length extension mainly focused on pre-training and mid-training \cite{xiong2024effective, gao2025train, wang2025octothinker}, which improved long-context retrieval but did not effectively enable long-context reasoning. With the emergence of Deepseek R1 \cite{deepseekai2025deepseekr1incentivizingreasoningcapability}, reinforcement learning (RL) has become a key paradigm for enhancing LLM reasoning ability.Specifically, QwenLong L1 \cite{wan2025qwenlong, shen2025qwenlong} pioneered long-context reasoning RL, catalyzing subsequent progress in both methodologies \cite{yan2025inftythink, yang2025longer, zhu2025chain, lin2025facilitating} and benchmarks \cite{bai2025longbench, ling2025longreason, li2024loogle, chen2026longbench, li2024long, wang2025needleinatable}. These developments have significantly accelerated the enhancement of long-context capabilities in LLMs.

\textbf{Data Synthesis Pipeline.} Moreover, long-context reasoning RL requires high-quality data, motivating increasing interest in data synthesis pipelines \cite{zhu2025generalizing, yang2025longfaith, zhang2025re3syn}. The QwenLong series further improves dataset scale, diversity, and complexity through iterative pipeline design \cite{wan2025qwenlong, shen2025qwenlong}. However, few studies systematically investigate what types of data are effective for long-context reasoning and why, which is crucial for the targeted construction of long-context reasoning RL datasets. In this work, we study the effectiveness of table data for long-context reasoning and analyze the underlying mechanisms, and further provide a complete table data synthesis pipeline for the community. In parallel, we also examine the impact of document-style tasks on long-context reasoning in another work.

%% file: src/conclusion_and_limitations.tex
\section{Conclusion and Future Work}
\label{sec:conclusion_and_limitations}
In this work, we systematically investigate why scalable table data enhances long-context reasoning. Based on these, we propose a simple and scalable pipeline, namely TableLong, for synthesizing high-quality, diverse, and verifiable structured table data to improve long-context reasoning via RL. Specifically, we mathematically analyze tabular dependency structures using mutual information, revealing periodic non-vanishing dependencies in table data, and experimentally validate the underlying mechanisms with comprehensive empirical studies. Extensive results show that table data significantly improves long-context reasoning across multiple benchmarks and generalizes well to diverse OOD tasks. In future work, we will further explore other data types to deepen data-centric understanding of long-context reasoning. We hope our findings provide practical guidance for effective post-training data design in LLMs.

%% file: src/appendix_theo.tex
\onecolumn

\section*{Appendix}

\section{Theoretical Details and Proofs}
\label{app:info_theory}

\subsection{Information-Theoretic Preliminaries}
\label{app:preliminaries}

\begin{definition}[Mutual Information~\cite{cover1999elements}]
For random variables $X$ and $Y$ with joint distribution $P_{X,Y}$ and marginals $P_X$, $P_Y$, the mutual information is:
\begin{equation}
I(X; Y) = D_{KL}(P_{X,Y} \| P_X \otimes P_Y) = \sum_{x,y} P(x,y) \log \frac{P(x,y)}{P(x)P(y)}
\end{equation}
Mutual information is symmetric, non-negative, and equals zero if and only if $X \perp Y$.
\end{definition}

\begin{definition}[Kullback-Leibler Divergence]
For distributions $P$ and $Q$ over the same space:
\begin{equation}
D_{KL}(P \| Q) = \sum_x P(x) \log \frac{P(x)}{Q(x)}
\end{equation}
$D_{KL}(P \| Q) \geq 0$ with equality if and only if $P = Q$.
\end{definition}

\begin{definition}[Mixture Distribution]
\label{def:mixture}
Given column distributions $\{P_j\}_{j=1}^m$, the mixture distribution is:
\begin{equation}
Q(v) = \frac{1}{m}\sum_{j=1}^m P_j(v)
\end{equation}
This is the marginal distribution of a cell $T_{i,J}$ when $J \sim \text{Uniform}([m])$.
\end{definition}

\begin{definition}[Position-to-Column Mapping]
\label{def:col_mapping}
For positions in the data region of a linearized table ($t > m$), define:
\begin{equation}
\text{col}(t) = ((t - m - 1) \mod m) + 1 \in [m]
\end{equation}
\end{definition}

\subsection{Assumptions}
\label{app:assumptions}

\begin{assumption}[Column Semantic Consistency]
\label{ass:consistency}
All cells in column $j$ are drawn from the same distribution:
\begin{equation}
T_{i,j} \sim P_j, \quad \forall i \in [n], \, \forall j \in [m]
\end{equation}
\end{assumption}

\begin{assumption}[Column Distribution Distinctiveness]
\label{ass:distinct}
Different columns have statistically distinguishable distributions:
\begin{equation}
D_{KL}(P_j \| P_k) > 0, \quad \forall j \neq k
\end{equation}
Equivalently, for each pair $j \neq k$, there exists some value $v$ such that $P_j(v) \neq P_k(v)$.
\end{assumption}

\begin{remark}[On Intra-Column Independence]
\label{rmk:intra_col}
We do \textbf{not} assume $T_{i,j} \perp T_{k,j} \mid j$ (conditional independence within columns). Real tables often exhibit intra-column dependencies (e.g., sorted data, time series). Our proofs analyze the conditional independence case to establish a \textit{lower bound} on $I^{same}$; any intra-column dependency strictly increases the mutual information.
\end{remark}

\subsection{Some Lemmas}
\label{app:same_col}

\begin{lemma}[Same-Column Pair Mutual Information]
\label{lem:same_col}
Let $J \sim \text{Uniform}([m])$. Under Assumptions~\ref{ass:consistency} and~\ref{ass:distinct}:
\begin{enumerate}
    \item[(i)] $I^{same} := I(T_{1,J}; T_{2,J}) > 0$
    \item[(ii)] For any $i \neq k$: $I(T_{i,J}; T_{k,J}) = I^{same}$
\end{enumerate}
That is, the same-column mutual information is strictly positive and independent of row indices.
\end{lemma}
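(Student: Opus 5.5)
We work in the baseline model of Remark~\ref{rmk:intra_col}: conditional on $J=j$, the cells $T_{i,j}$ and $T_{k,j}$ are i.i.d.\ with law $P_j$. Mixing over the column index then makes $T_{1,J}$ and $T_{2,J}$ a pair of exchangeable, marginally $Q$-distributed variables, with $Q$ the mixture of Definition~\ref{def:mixture}. Their joint law is
\begin{equation*}
P(a,b) = \frac{1}{m}\sum_{j=1}^m P_j(a)P_j(b).
\end{equation*}
The proof has two parts. Part (ii) is a symmetry statement. Part (i) shows that this joint law is not the product $Q\otimes Q$.

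\textbf{Part (ii).} By Assumption~\ref{ass:consistency}, every cell in column $j$ has law $P_j$. In the conditional-independence model, the joint law of $(T_{i,J},T_{k,J})$ for $i\neq k$ is therefore the displayed $P(a,b)$, and this expression does not involve $i$ or $k$. Since mutual information depends only on the joint law, $I(T_{i,J};T_{k,J}) = I(T_{1,J};T_{2,J}) = I^{same}$.

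If one also wants to allow intra-column dependence, one needs exchangeability of rows within a column. Otherwise the claim should be read as a statement about the lower-bound model of Remark~\ref{rmk:intra_col}. I would state this explicitly rather than hide it.

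\textbf{Part (i).} Since $I(T_{1,J};T_{2,J}) = D_{KL}(P\,\|\,Q\otimes Q)$, it suffices to exhibit one pair $(a,b)$ with $P(a,b)\neq Q(a)Q(b)$. The cleanest choice is the diagonal. For any value $a$,
\begin{equation*}
P(a,a) - Q(a)^2 = \frac{1}{m}\sum_j P_j(a)^2 - \Bigl(\frac1m\sum_j P_j(a)\Bigr)^2 = \mathrm{Var}_j[P_j(a)],
\end{equation*}
where $\mathrm{Var}_j$ is the variance of $P_j(a)$ under $J$ uniform. This is the cross-column variance that Proposition~\ref{prop:isame_characterization} refers to. By Assumption~\ref{ass:distinct} (with $m\ge 2$), there exist $j\neq k$ and a value $v$ with $P_j(v)\neq P_k(v)$. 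Hence $j\mapsto P_j(v)$ is non-constant, so $\mathrm{Var}_j[P_j(v)]>0$. It follows that $P(v,v)\neq Q(v)^2$, the joint law is not a product, and $I^{same}>0$ because KL divergence vanishes only at equality.

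\textbf{Main obstacle.} The calculation is routine; the difficulty is justifying that intra-column dependence cannot decrease the mutual information, as Remark~\ref{rmk:intra_col} asserts. In general, adding dependence does not monotonically increase mutual information, so I would not claim this.

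What can be proved is positivity in the dependent case, via $I(T_{1,J};T_{2,J}) \ge I(T_{1,J};T_{2,J}\mid J)$ plus a correction term. More cleanly, the chain rule gives
\begin{equation*}
I(T_1;T_2) \ge I(T_1;T_2,J) - I(T_1;J\mid T_2) \ge I(T_1;J) - H(J\mid T_2).
\end{equation*}
This bound is not obviously positive, so for positivity in the dependent case I would instead argue directly. Write $D(a,a) := P(a,a)-Q(a)^2$. Then $D(a,a) = \mathrm{Var}_j[P_j(a)] + \frac1m\sum_j \mathrm{Cov}_j(a)$, where $\mathrm{Cov}_j(a) := P(T_1=a,T_2=a\mid j)-P_j(a)^2$ is the within-column term. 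Positivity follows whenever these within-column terms are nonnegative, i.e.\ under positive intra-column association. I would therefore restrict the dependent case to that setting and say so in the statement.
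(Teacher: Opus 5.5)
Your proof is correct and follows the paper's own argument. In the conditionally independent model the joint law is the mixture $R(a,b)=\frac{1}{m}\sum_j P_j(a)P_j(b)$, and its diagonal exceeds $Q(a)^2$ by exactly $\mathrm{Var}_j[P_j(a)]$, as you compute (the paper's extra factor $1/m$ is an arithmetic slip). Assumption~\ref{ass:distinct} with $m\ge 2$ makes this positive for some $a$, and (ii) holds because $R$ does not involve $i,k$.

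Your caveat about intra-column dependence is well founded. The paper's inference from $R\neq Q\otimes Q$ to positivity ``in all cases'' is a non sequitur, and the claim can actually fail. For example, take $m=2$, binary cells with $P_1(1)=0.4$ and $P_2(1)=0.6$, column~2 independent, and $P(T_{1,1}=T_{2,1}=1)=0.14$ in column~1. The mixture joint then equals $Q\otimes Q$, so $I^{same}=0$. Restricting to conditional independence, or to nonnegative within-column association plus row exchangeability for (ii), is the right fix.
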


\begin{proof}
We prove both parts simultaneously.

\textbf{Step 1: Marginal distributions.}
For any row index $i$, by Assumption~\ref{ass:consistency} and $J \sim \text{Uniform}([m])$:
\begin{equation}
P(T_{i,J} = a) = \sum_{j=1}^m P(J=j) P_j(a) = \frac{1}{m}\sum_{j=1}^m P_j(a) = Q(a)
\end{equation}
This holds for all $i$, so all $T_{i,J}$ share the same marginal distribution $Q$.

\textbf{Step 2: Joint distribution.}
For any $i \neq k$, conditioning on the column:
\begin{equation}
P(T_{i,J} = a, T_{k,J} = b) = \sum_{j=1}^m P(J=j) \cdot P(T_{i,j} = a, T_{k,j} = b \mid J=j)
\end{equation}

To establish a lower bound independent of intra-column structure, consider the case where $T_{i,j} \perp T_{k,j} \mid j$. Then:
\begin{equation}
P(T_{i,J} = a, T_{k,J} = b) = \frac{1}{m}\sum_{j=1}^m P_j(a) P_j(b) =: R(a,b)
\label{eq:joint_R}
\end{equation}

Note that $R(a,b)$ depends only on the column distributions $\{P_j\}$, not on $i$ or $k$. If $T_{i,j} \not\perp T_{k,j} \mid j$, the actual joint distribution differs from $R$, but we will show $R \neq Q \otimes Q$, which implies $I(T_{i,J}; T_{k,J}) > 0$ in all cases.

\textbf{Step 3: Comparison with product of marginals.}
Under independence, we would have:
\begin{equation}
Q(a) Q(b) = \frac{1}{m^2} \sum_{j=1}^m \sum_{l=1}^m P_j(a) P_l(b)
\end{equation}

Consider the diagonal $a = b$:
\begin{align}
R(a,a) - Q(a)^2 &= \frac{1}{m}\sum_{j=1}^m P_j(a)^2 - \frac{1}{m^2}\left(\sum_{j=1}^m P_j(a)\right)^2 \nonumber \\
&= \frac{1}{m^2}\left[ m\sum_{j=1}^m P_j(a)^2 - \left(\sum_{j=1}^m P_j(a)\right)^2 \right] \nonumber \\
&= \frac{\text{Var}_j[P_j(a)]}{m}
\label{eq:variance_form}
\end{align}
where $\text{Var}_j[P_j(a)] = \frac{1}{m}\sum_j P_j(a)^2 - \left(\frac{1}{m}\sum_j P_j(a)\right)^2$ is the variance of $P_j(a)$ across columns.

\textbf{Step 4: Positivity.}
By Assumption~\ref{ass:distinct}, there exist $j \neq l$ with $P_j \neq P_l$. Hence there exists some $a$ with $P_j(a) \neq P_l(a)$, implying $\text{Var}_j[P_j(a)] > 0$.

Therefore $R(a,a) > Q(a)^2$ for some $a$, so $R \neq Q \otimes Q$.

\textbf{Step 5: Conclusion.}
Since the joint distribution differs from $Q \otimes Q$:
\begin{equation}
I(T_{i,J}; T_{k,J}) = D_{KL}(P_{T_{i,J}, T_{k,J}} \| Q \otimes Q) > 0
\end{equation}

Moreover, under conditional independence, the joint is exactly $R(a,b)$ from~\eqref{eq:joint_R}, which is independent of $i,k$. Thus $I(T_{i,J}; T_{k,J})$ is constant for all $i \neq k$, proving part (ii).
\end{proof}

\begin{lemma}[Periodic Structure]
\label{lem:periodic}
For a table with $n$ rows and $m$ columns, and any $k \in \{1, \ldots, n-1\}$:
\begin{equation}
\bar{I}_{table}(km) = I^{same}
\end{equation}
\end{lemma}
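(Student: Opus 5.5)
The plan is to show that every term in the average defining $\bar{I}_{table}(km)$ is a same-column pair, and then apply Lemma~\ref{lem:same_col}(ii) termwise. First I would fix $k \in \{1,\ldots,n-1\}$ and set $d = km$. I would check that $\mathcal{S}_d = \{t : m < t \le L - km\}$ is nonempty. With $L = m(n+1)$, we have $L - km = m(n+1-k) \ge 2m > m$, so the set is nonempty and contains $m(n-k)$ positions.

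For each $t \in \mathcal{S}_d$, both $t$ and $t+km$ lie in the data region. By Definition~\ref{def:col_mapping},
\[
\text{col}(t+km) = ((t+km-m-1) \bmod m)+1 = \text{col}(t).
\]
The row indices are $i = \lceil (t-m)/m \rceil$ and $i+k$, which are distinct since $k \ge 1$. So each pair $(W_t, W_{t+km})$ is a pair of cells $(T_{i,j}, T_{i+k,j})$ from the same column $j = \text{col}(t)$ in two different rows.

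The main obstacle is reconciling two things. The averaged MI here involves a deterministic column $j$, while $I^{same}$ in Definition~\ref{def:same_col_mi} is defined with a random column $J \sim \text{Uniform}([m])$. I would handle this by reading the average over $\mathcal{S}_d$ as an expectation over a uniformly random start position $t$. Since $\mathcal{S}_{km}$ consists of exactly $n-k$ full rows, $\text{col}(t)$ is then exactly uniform on $[m]$. So the pair $(W_t, W_{t+km})$, with $t$ drawn uniformly from $\mathcal{S}_{km}$, has the law of $(T_{i,J}, T_{i+k,J})$ with $J$ uniform. Concretely:
\begin{enumerate}
\item Group $\mathcal{S}_{km}$ into its $n-k$ complete row blocks.
\item Within each block, interpret the per-position MI consistently with the paper's convention, namely column identity marginalized with $J$ uniform, as in Lemma~\ref{lem:same_col}.
\item Invoke Lemma~\ref{lem:same_col}(ii) to conclude that each block contributes $I(T_{i,J}; T_{i+k,J}) = I^{same}$, independent of $i$ and $k$.
\end{enumerate}
Averaging $n-k$ equal quantities then gives $\bar{I}_{table}(km) = I^{same}$.

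The step that needs care is this identification of the positional average with the randomized-column MI. It works only because the row blocks are complete, which makes the column index exactly uniform; for lags that are not multiples of $m$ this uniformity and the same-column matching both fail. I would state this interpretation explicitly. Row-index invariance then follows from the conditional-independence computation~\eqref{eq:joint_R}, where the joint law $R$ does not depend on $i$ or $k$.
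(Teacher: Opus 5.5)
\textbf{There is a genuine gap, in the very step you flag as needing care.} Your alignment step coincides with the paper's proof. At lag $km$, every $t\in\mathcal{S}_{km}$ pairs $T_{i,c}$ with $T_{i+k,c}$ for $c=\text{col}(t)$. The paper then writes $I(T_{r_1,c};T_{r_1+k,c})=I^{same}$ by Lemma~\ref{lem:same_col}(ii) and averages. You correctly notice that this applies a random-column result to a fixed column; the paper does so without comment. However, your repair fails. $\bar{I}_{table}(km)$ is an average of mutual informations, each computed with $t$, and hence the column, held fixed. Over one complete row block that average is
\[
\frac{1}{m}\sum_{c=1}^{m} I(T_{i,c};T_{i+k,c}) = I(T_{i,J};T_{i+k,J}\mid J),
\]
which is a conditional MI given the column, not $I(T_{i,J};T_{i+k,J})=I^{same}$. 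Mutual information is not linear under mixing. The exact uniformity of $\text{col}(T)$ for $T$ uniform on $\mathcal{S}_{km}$ shows only that the pooled pair $(W_T,W_{T+km})$ has law $R$, so $I(W_T;W_{T+km})=I^{same}$. It does not identify $\frac{1}{|\mathcal{S}_{km}|}\sum_t I(W_t;W_{t+km})$. In the conditional-independence case the paper analyzes, the discrepancy is total. For fixed $c$, $T_{i,c}$ and $T_{i+k,c}$ are independent draws from $P_c$, so every term vanishes and the literal $\bar{I}_{table}(km)$ is $0$. By contrast, $I^{same}>0$ comes entirely from uncertainty about the shared column, which is exactly what fixing the position removes.

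So the interpretation you promise to state has to be a genuine change of definition or model, and you must pick one.
\begin{itemize}
\item \emph{Redefine the quantity.} Take $\bar{I}_{table}(d)$ to be the MI of the pooled pair $(W_T,W_{T+d})$ with $T$ uniform on $\mathcal{S}_d$. Then your complete-block observation (the row and column of $T$ are jointly uniform on $[n-k]\times[m]$), together with the row-invariance of $R$ in \eqref{eq:joint_R}, is precisely the proof.
\item \emph{Change the model.} Make the column type latent at each position, e.g.\ let column position $c$ be governed by $P_{\sigma(c)}$ for a uniformly random permutation $\sigma$. Then each single pair $(W_t,W_{t+km})$ already has joint law $R$ and marginals $Q$. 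Hence $I(W_t;W_{t+km})=I^{same}$ termwise, the averaging is trivial, and block completeness plays no role.
\end{itemize}
Your step~2 gestures at the second reading, but your justification via uniformity of $\text{col}(t)$ belongs to the first. The claim that each block contributes $I(T_{i,J};T_{i+k,J})$ conflates the two.
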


\begin{proof}
\textbf{Step 1: Same-column positions.}
For $t, t' > m$ in the data region:
\begin{equation}
\text{col}(t) = \text{col}(t') \iff t' - t \equiv 0 \pmod{m}
\end{equation}

\textbf{Step 2: Structure at lag $km$.}
When $d = km$ with $1 \leq k \leq n-1$, for each $t \in \mathcal{S}_{km} = \{t : m < t \leq L - km\}$:
\begin{itemize}
    \item $W_t = T_{r_1, c}$ for some row $r_1$ and column $c = \text{col}(t)$
    \item $W_{t+km} = T_{r_1+k, c}$ (same column, row differs by $k$)
\end{itemize}

By Lemma~\ref{lem:same_col}(ii):
\begin{equation}
I(W_t; W_{t+km}) = I(T_{r_1, c}; T_{r_1+k, c}) = I^{same}
\end{equation}

\textbf{Step 3: Averaging.}
Since every pair contributes equally:
\begin{equation}
\bar{I}_{table}(km) = \frac{1}{|\mathcal{S}_{km}|} \sum_{t \in \mathcal{S}_{km}} I^{same} = I^{same}
\end{equation}
\end{proof}

\begin{lemma}[Cross-Column Independence]
\label{lem:cross_col}
Let $J, L \sim \text{Uniform}([m])$ be independent. Then $T_{i,J} \perp T_{k,L}$, and hence $I(T_{i,J}; T_{k,L}) = 0$.
\end{lemma}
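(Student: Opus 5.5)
The plan is to compute the joint law of $(T_{i,J}, T_{k,L})$ by conditioning on the pair of column indices $(J,L)$, and then check that it factors into the product of the marginals. By Step~1 of the proof of Lemma~\ref{lem:same_col}, each marginal equals the mixture $Q$ of Definition~\ref{def:mixture}.

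I will work in the same baseline regime used to lower-bound $I^{same}$ in Lemma~\ref{lem:same_col}: distinct cells are pairwise independent given their column labels, with $T_{i,j}\sim P_j$ by Assumption~\ref{ass:consistency}. I will also take $i\neq k$, so that $T_{i,J}$ and $T_{k,L}$ always refer to distinct cells. This restriction is needed: if $i=k$, the event $J=L$ (probability $1/m$) makes the two variables literally the same cell, and independence fails.

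\textbf{Step 1: joint law.} Since $(J,L)$ is uniform on $[m]^2$ and independent of the table entries,
\begin{equation*}
P(T_{i,J}=a,\,T_{k,L}=b)=\frac{1}{m^2}\sum_{j=1}^m\sum_{l=1}^m P(T_{i,j}=a,\,T_{k,l}=b).
\end{equation*}

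\textbf{Step 2: factor each summand.} For every pair $(j,l)$, including the diagonal $j=l$ (because $i\neq k$), the two cells are distinct. Pairwise independence then gives $P(T_{i,j}=a,T_{k,l}=b)=P_j(a)P_l(b)$.

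\textbf{Step 3: the double sum factorizes.}
\begin{equation*}
\frac{1}{m^2}\sum_{j,l}P_j(a)P_l(b)=\Big(\frac{1}{m}\sum_j P_j(a)\Big)\Big(\frac{1}{m}\sum_l P_l(b)\Big)=Q(a)Q(b).
\end{equation*}
This is exactly $P(T_{i,J}=a)\,P(T_{k,L}=b)$, so the two variables are independent. Hence $I(T_{i,J};T_{k,L})=D_{KL}(Q\otimes Q\,\|\,Q\otimes Q)=0$.

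The contrast with Lemma~\ref{lem:same_col} is the point worth noting. There, the shared index $J$ collapses the double sum onto its diagonal, producing the variance excess of Eq.~\eqref{eq:variance_form}. Here, independent sampling of $L$ spreads the mass uniformly over all $m^2$ column pairs, and that excess disappears.

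The main obstacle is the modeling assumption, not the algebra. Assumptions~\ref{ass:consistency}--\ref{ass:distinct} alone specify only the marginals of cells. If cells in different columns of the same row, or in the same column, were correlated, the $(j,l)$ summands would not factor and the lemma could fail. I would therefore state explicitly that the lemma is proved under the pairwise independence of distinct cells used in Step~2 of Lemma~\ref{lem:same_col}, and under $i\neq k$. I expect the authors intend this reading, since the lemma serves to show that off-period pairs carry no column-induced dependence.
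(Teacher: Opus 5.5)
Your proof is correct and is the same computation as the paper's: condition on $(J,L)$, factor each summand as $P_j(a)P_l(b)$, and collapse the double sum to $Q(a)Q(b)$. The hypotheses you state explicitly---$i\neq k$ and independence of distinct cells given their columns---are what the paper's one-line derivation uses silently to factor the diagonal terms $j=l$, so stating them is an improvement (one small quibble: for $i=k$ independence fails only generically, not when every $P_j$ is a point mass).
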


\begin{proof}
When $J \perp L$:
\begin{align}
P(T_{i,J} = a, T_{k,L} = b) &= \sum_{j,l} P(J=j)P(L=l) P_j(a) P_l(b) \nonumber \\
&= \left(\frac{1}{m}\sum_j P_j(a)\right)\left(\frac{1}{m}\sum_l P_l(b)\right) = Q(a) Q(b)
\end{align}
Hence $T_{i,J} \perp T_{k,L}$.
\end{proof}

\subsection{Characterization of $I^{same}$}

\begin{proposition}[Characterization of Same-Column MI]
\label{prop:isame_characterization}
Under Assumptions~\ref{ass:consistency} and~\ref{ass:distinct}, in the conditional independence case ($T_{i,j} \perp T_{k,j} \mid j$):
\begin{equation}
I^{same} = D_{KL}(R \| Q \otimes Q)
\end{equation}
where $R(a,b) = \frac{1}{m}\sum_{j=1}^m P_j(a)P_j(b)$ and $Q = \frac{1}{m}\sum_j P_j$.

The magnitude of $I^{same}$ is governed by the \textbf{cross-column variance}:
\begin{equation}
\sigma^2 := \sum_a \text{Var}_j[P_j(a)] = \sum_a \left( \frac{1}{m}\sum_j P_j(a)^2 - Q(a)^2 \right)
\end{equation}
Specifically:
\begin{itemize}
    \item $I^{same} > 0$ if and only if $\sigma^2 > 0$ (i.e., columns are not all identical)
    \item $I^{same}$ increases monotonically with $\sigma^2$
    \item When columns are highly distinct (large $\sigma^2$), $I^{same}$ is correspondingly large
\end{itemize}
\end{proposition}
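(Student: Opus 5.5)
The plan is to prove the three claims in order, reusing the computations from the proof of Lemma~\ref{lem:same_col}.

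\textbf{Step 1 (identity).} In the conditional independence case, Step~2 of that proof, i.e.\ \eqref{eq:joint_R}, already shows that the joint law of $(T_{1,J},T_{2,J})$ is $R$. Step~1 shows that both marginals equal $Q$. Plugging these into the definition of mutual information as a KL divergence from the product of marginals gives $I^{same}=D_{KL}(R\,\|\,Q\otimes Q)$ immediately.

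\textbf{Step 2 (the equivalence $I^{same}>0 \iff \sigma^2>0$).}
\begin{itemize}
\item For ``$\Leftarrow$'', note that $\sigma^2>0$ forces $\mathrm{Var}_j[P_j(a)]>0$ for some $a$. By \eqref{eq:variance_form} this gives $R(a,a)>Q(a)^2$, so $R\neq Q\otimes Q$ and the KL divergence is strictly positive.
\item For ``$\Rightarrow$'', argue by contraposition. Since $\sigma^2$ is a sum of nonnegative variances, $\sigma^2=0$ means $P_j(a)=Q(a)$ for every $j$ and $a$. Then $R(a,b)=\frac1m\sum_j Q(a)Q(b)=Q(a)Q(b)$, so $I^{same}=0$.
\end{itemize}
This step is routine.

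\textbf{Step 3 (monotonicity in $\sigma^2$).} This is the main obstacle, because $I^{same}$ is not literally a function of the scalar $\sigma^2$: different families $\{P_j\}$ with the same $\sigma^2$ can give different KL values. I would make the bullet rigorous in two complementary ways.

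\emph{(a) Exact monotonicity along a controlled family.} Fix the centre $Q$ and the deviations $\Delta_j:=P_j-Q$. Consider $P_j^{\lambda}=Q+\lambda\Delta_j$ for $\lambda\in[0,1]$; these are valid distributions, being mixtures of $Q$ and $P_j$. Since $\sum_j\Delta_j=0$, a direct expansion gives
\begin{equation*}
R_\lambda=(1-\lambda^2)\,Q\otimes Q+\lambda^2 R, \qquad \sigma^2_\lambda=\lambda^2\sigma^2 .
\end{equation*}
The map $\mu\mapsto D_{KL}((1-\mu)Q\otimes Q+\mu R\,\|\,Q\otimes Q)$ is convex, vanishes at $\mu=0$, and is nonnegative. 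Hence it is nondecreasing on $[0,1]$, and strictly increasing once $R\neq Q\otimes Q$. Writing $\mu=\lambda^2=\sigma^2_\lambda/\sigma^2$, this shows that $I^{same}$ increases monotonically with the cross-column variance along this family.

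\emph{(b) Quantitative two-sided control.} For the lower bound, I would bound the total variation distance by restricting to the diagonal:
\begin{equation*}
\|R-Q\otimes Q\|_{TV}\ \ge\ \tfrac12\sum_a |R(a,a)-Q(a)^2| \;=\; \frac{\sigma^2}{2m}.
\end{equation*}
Pinsker's inequality then yields $I^{same}\ge \sigma^4/(2m^2)$ in nats. For the upper bound, I would use $D_{KL}\le\log(1+\chi^2(R\|Q\otimes Q))$. The $\chi^2$ term should be expressible through the same variances weighted by $1/Q$, which shows that large $I^{same}$ requires large cross-column variation.

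Together, (a) and (b) justify the statement that highly distinct columns (large $\sigma^2$) yield correspondingly large $I^{same}$. The care needed here is in stating precisely in what sense ``monotone'' holds: along the family in (a), and via the bounds in (b).
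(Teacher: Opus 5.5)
Your proof is correct, and for the second and third bullets it is a genuinely different and more rigorous route than the paper's. The paper takes the identity from Lemma~\ref{lem:same_col}, notes that the diagonal gaps $R(a,a)-Q(a)^2$ track $\mathrm{Var}_j[P_j(a)]$, and asserts that $\sigma^2$ therefore ``governs'' $I^{same}$. It proves only $\sigma^2>0\Rightarrow I^{same}>0$ (via Assumption~\ref{ass:distinct}) and gives no argument for monotonicity.

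You add three things:
\begin{itemize}
\item the converse direction of the equivalence;
\item a genuine monotonicity statement along $P_j^\lambda=(1-\lambda)Q+\lambda P_j$, where $R_\lambda=(1-\lambda^2)\,Q\otimes Q+\lambda^2R$, $\sigma_\lambda^2=\lambda^2\sigma^2$ and the convexity argument all check out;
\item quantitative bounds.
\end{itemize}

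Your refusal to read the second bullet literally is necessary, not pedantic, because $I^{same}$ is not monotone in $\sigma^2$. Take $m=2$.
\begin{itemize}
\item Columns uniform on disjoint $K$-element sets give $\sigma^2=1/(2K)$ but $I^{same}=\log 2$. Here $J$ is a function of each cell, so $I^{same}\ge H(J)$, and the Markov chain $T_{1,J}\to J\to T_{2,J}$ gives $I^{same}\le\log 2$.
\item Bernoulli columns with parameters $\tfrac12\pm\varepsilon$ give $\sigma^2=2\varepsilon^2$ and $I^{same}<\log 2$.
\end{itemize}
At $K=2$ and $\varepsilon=0.4$, the first family has the smaller $\sigma^2$ but the larger $I^{same}$.

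Letting $K\to\infty$ also shows that large $I^{same}$ does not force large $\sigma^2$. So your upper bound must be phrased via $W:=\sum_a \mathrm{Var}_j[P_j(a)]/Q(a)=\frac1m\sum_j\chi^2(P_j\|Q)$. Concretely, $R-Q\otimes Q=C$ with $C(a,b)=\frac1m\sum_j\Delta_j(a)\Delta_j(b)$. Cauchy--Schwarz gives $C(a,b)^2\le C(a,a)\,C(b,b)$, hence $\chi^2(R\|Q\otimes Q)\le W^2$. This is an inequality, not an exact expression as your wording suggests.

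One correction: \eqref{eq:variance_form} in the paper carries a spurious factor $1/m$, and you inherited it. With the population variance defined there, $R(a,a)-Q(a)^2=\frac1m\sum_jP_j(a)^2-Q(a)^2=\mathrm{Var}_j[P_j(a)]$ exactly. For a check, take $m=2$, $P_1=\delta_0$, $P_2=\delta_1$: both sides equal $1/4$. Hence $\frac12\sum_a|R(a,a)-Q(a)^2|=\sigma^2/2$, not $\sigma^2/(2m)$, and Pinsker gives $I^{same}\ge\sigma^4/2$. Your stated bound $\sigma^4/(2m^2)$ remains valid, just weaker.
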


\begin{proof}
The expression $I^{same} = D_{KL}(R \| Q \otimes Q)$ follows directly from the proof of Lemma~\ref{lem:same_col}.

For the characterization, note from~\eqref{eq:variance_form} that $R(a,a) - Q(a)^2 = \frac{\text{Var}_j[P_j(a)]}{m}$. The KL divergence $D_{KL}(R \| Q \otimes Q)$ is determined by how much $R$ differs from $Q \otimes Q$. Since the diagonal differences $R(a,a) - Q(a)^2$ are proportional to $\text{Var}_j[P_j(a)]$, the total cross-column variance $\sigma^2 = \sum_a \text{Var}_j[P_j(a)]$ governs the magnitude of $I^{same}$.

By Assumption~\ref{ass:distinct}, $\sigma^2 > 0$, ensuring $I^{same} > 0$.
\end{proof}

\begin{remark}
The cross-column variance $\sigma^2$ can be interpreted as a measure of column heterogeneity. For example:
\begin{itemize}
    \item If all columns have the same distribution ($P_j = P$ for all $j$), then $\sigma^2 = 0$ and Assumption~\ref{ass:distinct} is violated.
    \item If columns have disjoint supports (e.g., column 1 contains only integers, column 2 contains only strings), then $\sigma^2$ is maximal and $I^{same}$ is large.
\end{itemize}
\end{remark}

\subsection{Proofs of Main Theorems}

\subsubsection{Proof of Theorem~\ref{thm:main}}
\label{app:proof_main}

\begin{proof}
This follows directly from Lemma~\ref{lem:periodic}: for any $k \in \{1, \ldots, n-1\}$, $\bar{I}_{table}(km) = I^{same} > 0$.
\end{proof}

\subsubsection{Proof of Corollary~\ref{cor:liminf}}
\label{app:proof_liminf}

\begin{proof}
For any $K \in \mathbb{Z}^+$, consider a table with $n > K$ rows. By Theorem~\ref{thm:main}, $\bar{I}_{table}(Km) = I^{same}$. Since $K$ is arbitrary, the set $\{km : k \in \mathbb{Z}^+\}$ is unbounded, and at each such lag the average MI equals $I^{same}$. Therefore:
\begin{equation}
\liminf_{d \to \infty} \bar{I}_{table}(d) \geq I^{same} > 0
\end{equation}
\end{proof}

\subsubsection{Proof of Corollary~\ref{cor:ratio}}
\label{app:proof_ratio}

\begin{proof}
By Theorem~\ref{thm:main}, $\bar{I}_{table}(km) = I^{same} > 0$ for all $k \in \mathbb{Z}^+$. By the power-law assumption~\eqref{eq:powerlaw}, $I_{text}(km) \sim C(km)^{-\alpha} \to 0$ as $k \to \infty$. Hence:
\begin{equation}
\lim_{k \to \infty} \frac{\bar{I}_{table}(km)}{I_{text}(km)} = \lim_{k \to \infty} \frac{I^{same}}{C(km)^{-\alpha}} = +\infty
\end{equation}
\end{proof}

\subsubsection{Proof of Theorem~\ref{thm:eff_dist}}
\label{app:proof_eff}

\begin{proof}
\textbf{Part 1 (Table data):}
For any $\tau < I^{same}$ and any $K \in \mathbb{Z}^+$, consider a table with $n > K$ rows. By Lemma~\ref{lem:periodic}:
\begin{equation}
\bar{I}_{table}(Km) = I^{same} > \tau
\end{equation}
Since $K$ is arbitrary:
\begin{equation}
D_{eff}^{table}(\tau) = \sup\{d : \bar{I}_{table}(d) \geq \tau\} = +\infty
\end{equation}

\textbf{Part 2 (Natural language):}
Under power-law decay $I_{text}(d) \sim C d^{-\alpha}$, for any $\tau > 0$:
\begin{equation}
I_{text}(d) < \tau \quad \text{for all } d > d^* := (C/\tau)^{1/\alpha}
\end{equation}
Hence $D_{eff}^{text}(\tau) \leq d^* < \infty$.
\end{proof}

\subsection{Proofs of Propositions}

\subsubsection{Proof of Proposition for Uniform Attention Requirements}
\label{app:proof_attention}

\begin{proof}
Let $W_t$ be a cell in column $c = \text{col}(t)$. The set of past tokens in the same column is $\mathcal{C}_t := \{W_s : s < t, \, \text{col}(s) = c\}$. 

For any $W_s \in \mathcal{C}_t$, both $W_s$ and $W_t$ are cells from column $c$, at different rows. By Lemma~\ref{lem:same_col}(ii):
\begin{equation}
I(W_s; W_t) = I^{same} > 0
\end{equation}
regardless of the distance $t - s$. Thus all same-column predecessors provide the same amount of information $I^{same}$ about $W_t$, requiring uniform attention across the context.
\end{proof}

\subsubsection{Proof of Proposition for Context Length Requirements}
\label{app:proof_context}

\begin{proof}
Consider any column $c \in [m]$. The cell at row 1 occupies position $m + c$ in the linearized sequence, while the cell at row $n$ occupies position $m + (n-1)m + c = m + c + (n-1)m$. Thus these two same-column cells are separated by exactly $(n-1) \cdot m$ positions.

By Lemma~\ref{lem:same_col}, this pair has mutual information $I^{same} > 0$. To capture this dependency, the model's effective context must span at least $(n-1) \cdot m$ tokens.
\end{proof}

\subsection{Auxiliary Result}

\begin{lemma}[Positive Column-Type MI]
\label{lem:type_mi}
Let $J \sim \text{Uniform}([m])$ and $T_J$ be a cell from column $J$. Under Assumptions~\ref{ass:consistency}--\ref{ass:distinct}:
\begin{equation}
I(T_J; J) > 0
\end{equation}
\end{lemma}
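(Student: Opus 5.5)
The plan is to write the mutual information as an expected KL divergence between the column-conditional law and the mixture $Q$ of Definition~\ref{def:mixture}, and then use Assumption~\ref{ass:distinct} to show that at least one term in this average is strictly positive.

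First, fix the joint law of $(J, T_J)$. Take $J \sim \text{Uniform}([m])$. By Assumption~\ref{ass:consistency}, $P(T_J = a \mid J = j) = P_j(a)$ regardless of the row index. Summing out $J$ gives the marginal $P(T_J = a) = Q(a)$, as in Step 1 of the proof of Lemma~\ref{lem:same_col}.

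Second, use the standard decomposition
\begin{equation*}
I(T_J; J) = \sum_{j=1}^m P(J=j)\, D_{KL}(P_{T_J \mid J=j} \,\|\, P_{T_J}) = \frac{1}{m}\sum_{j=1}^m D_{KL}(P_j \,\|\, Q).
\end{equation*}
This is just the definition $I = D_{KL}(P_{T_J,J} \| P_{T_J}\otimes P_J)$ with the joint factored as $P(j)P_j(a)$.

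Third, show that this average is strictly positive. Each term is nonnegative, so it suffices to find one $j$ with $P_j \neq Q$. Suppose instead that $P_j = Q$ for every $j$. Then all the $P_j$ coincide, which contradicts Assumption~\ref{ass:distinct} as soon as $m \geq 2$. Hence some $D_{KL}(P_j\|Q) > 0$, and the average is positive.

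Alternatively, the same conclusion follows from the variance form~\eqref{eq:variance_form}. Assumption~\ref{ass:distinct} gives some $a$ with $\text{Var}_j[P_j(a)] > 0$, so the map $j \mapsto P_j(a)$ is nonconstant. Therefore $T_J$ and $J$ are not independent, and $I(T_J; J) > 0$.

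The main obstacle is only a degenerate case and a bookkeeping point. Assumption~\ref{ass:distinct} is vacuous when $m=1$, and in that case $I(T_J;J)=0$ because $J$ is constant. I would therefore state $m \geq 2$ explicitly, since the assumption's phrase ``different columns'' implicitly presupposes it. I would also check that the KL terms are finite. This holds because $P_j \ll Q$, as $Q \geq P_j/m$ pointwise, so the decomposition is well defined even with infinite supports.
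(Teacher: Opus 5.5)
Your proof is correct. It reaches the conclusion from the opposite side of the mutual-information identity than the paper does.

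The paper writes $I(T_J;J)=H(J)-H(J\mid T_J)=\log m-H(J\mid T_J)$ and computes the posterior $P(J=j\mid T_J=v)$ by Bayes' rule. It then argues that a value $v$ with $P_j(v)\neq P_k(v)$ makes this posterior non-uniform, so $H(J\mid T_J)<\log m$.

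You instead condition on the column and obtain $I(T_J;J)=\frac{1}{m}\sum_j D_{KL}(P_j\|Q)$, a generalized Jensen--Shannon divergence. Positivity then reduces to exhibiting one $j$ with $P_j\neq Q$.

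Your version is a little cleaner on bookkeeping. Every term carries the fixed weight $1/m>0$, so you never need to check that the distinguishing value $v$ has positive marginal probability. The paper leaves that check implicit; it holds because $P_j(v)\neq P_k(v)$ forces $Q(v)>0$. Your route also bypasses the paper's Bayes formula, which as printed has a spurious factor $m$; the posterior should be $P_j(v)/\sum_l P_l(v)$. The paper's route, in exchange, matches the intuition that a cell's value reveals which column it came from.

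Your remark that $m\ge 2$ must be assumed is a real point the paper omits. For $m=1$, Assumption~\ref{ass:distinct} is vacuous and $I(T_J;J)=0$.
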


\begin{proof}
We have $I(T_J; J) = H(J) - H(J|T_J) = \log m - H(J|T_J)$.

By Bayes' rule: $P(J=j|T_J=v) = \frac{m P_j(v)}{\sum_l P_l(v)}$.

By Assumption~\ref{ass:distinct}, there exist $j \neq k$ and $v$ with $P_j(v) \neq P_k(v)$, so $P(J=j|T_J=v) \neq P(J=k|T_J=v)$. The posterior is non-uniform for some $v$, implying $H(J|T_J) < H(J)$, hence $I(T_J; J) > 0$.
\end{proof}

%% file: src/appendix_configdetails.tex
\section{Training Details}
\label{app:training_details}
All experiments in this paper are performed on 64 H20 NVIDIA GPUs with the Verl framework~\cite{sheng2024hybridflow} based on the GRPO algorithm~\cite{shao2024deepseekmath}, as detailed below:
\begin{equation}
\begin{aligned}
\mathcal{J}_{\mathrm{GRPO}}(\theta)
&=
\mathbb{E}_{(q,a)\sim\mathcal{D},\, \{o_i\}_{i=1}^{G}\sim \pi_{\theta_{\mathrm{old}}}(\cdot \mid q)}
\\
&\Bigg[
\frac{1}{G}\sum_{i=1}^{G}
\frac{1}{|o_i|}\sum_{t=1}^{|o_i|}
\Big(
\min\big(
r_{i,t}(\theta)\hat{A}_{i,t},
\operatorname{clip}(r_{i,t}(\theta), 1-\epsilon, 1+\epsilon)\hat{A}_{i,t}
\big)
-
\beta D_{\mathrm{KL}}(\pi_{\theta} \Vert \pi_{\mathrm{ref}})
\Big)
\Bigg]
\end{aligned}
\end{equation}
where,
\begin{equation}
r_{i,t}(\theta)
=
\frac{\pi_{\theta}\big(o_{i,t} \mid q, o_{i,<t}\big)}
{\pi_{\theta_{\mathrm{old}}}\big(o_{i,t} \mid q, o_{i,<t}\big)},
\quad
\hat{A}_{i,t}
=
\frac{R_i - \mathrm{mean}\big(\{R_i\}_{i=1}^{G}\big)}
{\mathrm{std}\big(\{R_i\}_{i=1}^{G}\big)} .
\end{equation}

\begin{itemize}
    \item \textit{Rollout Strategy:} We configure the model for long-context input processing. To ensure sample efficiency, we implement dynamic  sampling, performing large-scale rollouts to identify and retain only informative groups with non-zero advantage variance (i.e., filtering out unanimously correct or incorrect samples).
    \item \textit{Optimization Algorithm:} We adopt the clip higher strategy \cite{yu2025dapo}. This strategy is crucial for accelerating convergence on positive signals while ensuring update safety on negative ones.
    \item \textit{Reward Modeling:} We utilize gpt-oss-120B\cite{openai2025gptoss120bgptoss20bmodel} as an outcome-based judge to verify the semantic consistency between generated responses and the ground truth.
\end{itemize}
\noindent Detailed hyperparameters (e.g., batch sizes, learning rates, and clipping thresholds) are provided in Table~\ref{tab:training_hyperparams}.

\begin{table}[h]
    \centering
    \caption{Detailed hyperparameters for RL training}
    \label{tab:training_hyperparams}
    \vspace{2mm}
    \begin{tabular}{l|c}
        \toprule
        \textbf{Hyperparameter} & \textbf{Value} \\
        \midrule
        \multicolumn{2}{l}{\textbf{\textit{Computational Setup}}} \\
        Hardware & 64 $\times$ NVIDIA H20 \\
        Parallelism Strategy & TP=8 \\
        Offload & Disabled \\
        \midrule
        \multicolumn{2}{l}{\textbf{\textit{Training Parameters}}} \\
        Max Sequence Capacity & 32,768 \\
        \hspace{3mm} Max Input Length & 28,672 \\
        \hspace{3mm} Max Output Length & 4,096 \\
        Global Train Batch Size & 128 \\
        Mini-Batch Size & 32 \\
        Learning Rate & $2 \times 10^{-6}$ \\
        KL Coefficient ($\beta$) & 0.001 \\
        Rollout Batch Size (Gen BS) & 256 \\
        Group Size ($G$) & 16 \\
        Temperature & 0.85 \\
        Top-$p$ & 1.0 (Disabled) \\
        Clip Ratio High ($\epsilon_{high}$) & 0.28 \\
        Clip Ratio Low ($\epsilon_{low}$) & 0.20 \\
        \bottomrule
    \end{tabular}
\end{table}

%% file: src/appendix_benchmarks.tex
\section{Benchmark Specifications}
\label{app:benchmark_details}

To ensure a rigorous and standardized evaluation, we strictly adhere to the official protocols of each benchmark. 
\textbf{Evaluation:} We utilize the official scoring scripts provided by the respective benchmarks. The detailed composition of our evaluation suite is provided in Table~\ref{tab:benchmark_details} and ~\ref{tab:benchmark_details_others}.

\begin{table}[H]
    \centering
    \caption{Details of the evaluation benchmarks for long-context reasoning. "Subset/Task" indicates the specific evaluation partition used. "Num. Samples" denotes the number of test instances after filtering.}
    \label{tab:benchmark_details}
    \vspace{2mm}
    \resizebox{\textwidth}{!}{
    \begin{tabular}{l|l|l|c|l}
        \toprule
        \textbf{Category} & \textbf{Benchmark} & \textbf{Subset / Task} & \textbf{Num. Samples} & \textbf{Domain Focus} \\
        \midrule
        \multirow{2}{*}{\textbf{Holistic}} 
        & LongBench-v2 \cite{bai2025longbench} & \textit{Full Set} & 503 & General Real-world \\
        & Loong \cite{wang2024leave} & \textit{Full Set} & 1,441 & Spotlight QA / Retrieval \\
        \midrule
        \multirow{3}{*}{\textbf{Retrieval}} 
        & Browscomplong \cite{BrowseComplong} & \textit{Standard} & 327 & Complex Search \\
        & MRCR \cite{MRCR} & \textit{Standard} & 1,437 & Multi-round QA \\
        & Ruler \cite{hsieh2024ruler} & \textit{QA2 Task} & 2,495 & Variable Tracking \\
        \midrule
        \multirow{2}{*}{\textbf{Reasoning}} 
        & GSM-Infinite \cite{zhou2025gsm} & \textit{Standard} & 500 & Long-range Math \\
        & Oolong \cite{bertsch2025oolong} & \textit{Synthetic Subset} & 3,127 & Info Aggregation \\
        \bottomrule
    \end{tabular}
    }
\end{table}

\begin{table}[H]
    \centering
    \caption{Details of the evaluation benchmarks for other domains. "Num. Samples" denotes the number of test instances after filtering.}
    \label{tab:benchmark_details_others}
    \vspace{2mm}
    \resizebox{\textwidth}{!}{
    \begin{tabular}{l|l|c|l}
        \toprule
        \textbf{Category} & \textbf{Benchmark} & \textbf{Num. Samples} & \textbf{Domain Focus} \\
        \midrule
        Science & GPQA-Diamond \cite{rein2024gpqa} & 198 & Scientific Reasoning \\
        \midrule
        Math & AIME 2025 \cite{balunovic_srimatharena_2025} & 30 & Mathematical Reasoning \\
        \midrule
        Multi-turn Dialogue & MultiChallenge \cite{deshpande2025multichallenge} & 273 & Real-world Multi-turn Dialogue \\
        \midrule
        Code & LiveCodeBench \cite{jain2024livecodebench} & 167 & Code Generation \\
        \bottomrule
    \end{tabular}
    }
\end{table}

%% file: src/appendix_details_exps.tex
\section{Detailed Results.}
\label{app:detailed_results}
In Figure~\ref{fig:structure_exps_14B}, we provide additional decomposition experiments on semantics, delimiters, and noise for Deepseek-R1-Distill-Qwen-14B in Table~\ref{tab:length_results}.
Additionally, we provide a detailed performance comparison across varying length scales for Deepseek-R1-Distill-Qwen-32B and 14B.

\begin{figure}[h]
\begin{subfigure}{0.49\textwidth}
    \centering
    \includegraphics[width=\linewidth]{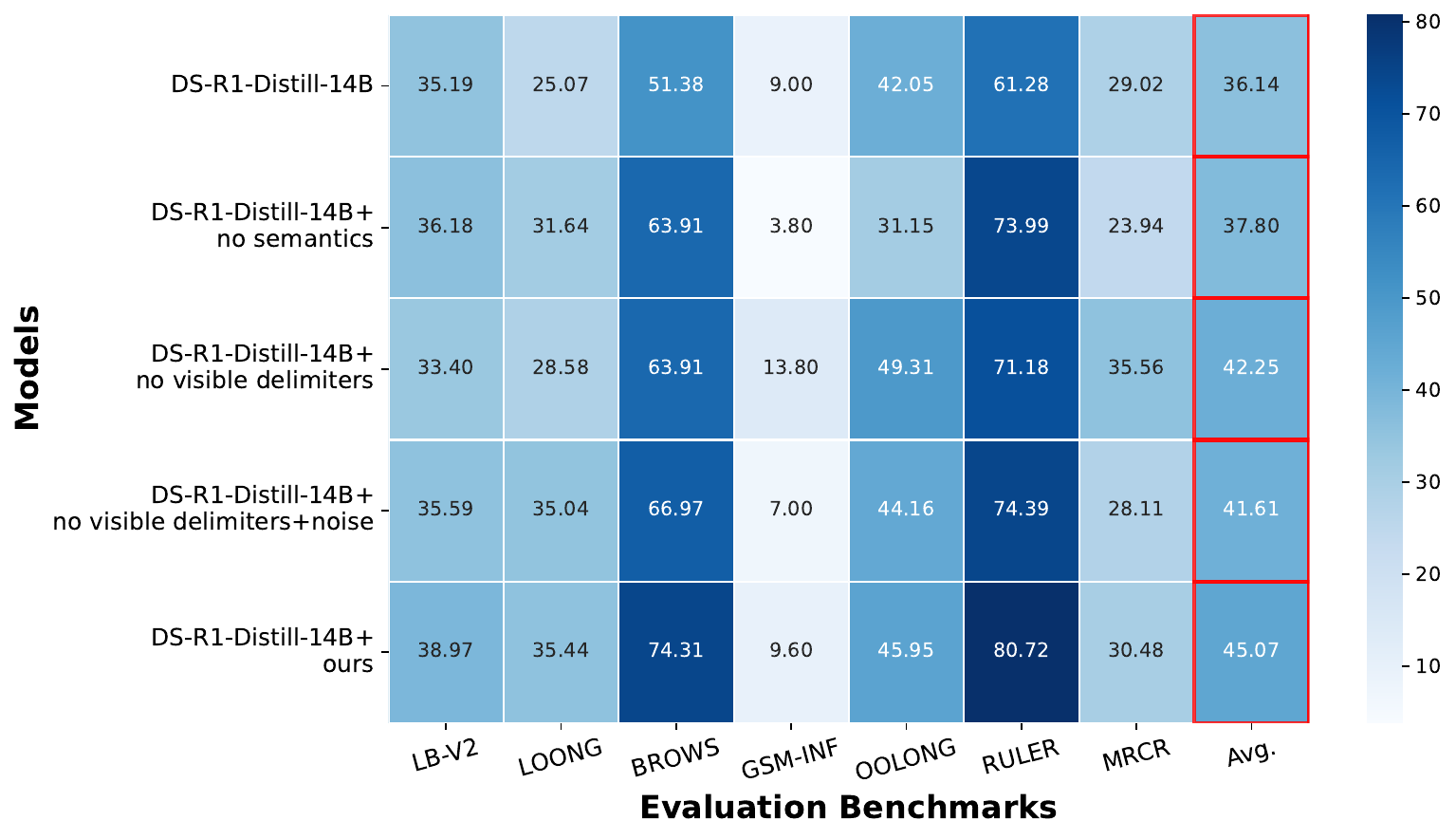}
    \subcaption{Impact of semantics, delimiters, and noise.}
\end{subfigure}
\hfill
\begin{subfigure}{0.49\textwidth}
    \centering
    \includegraphics[width=\linewidth]{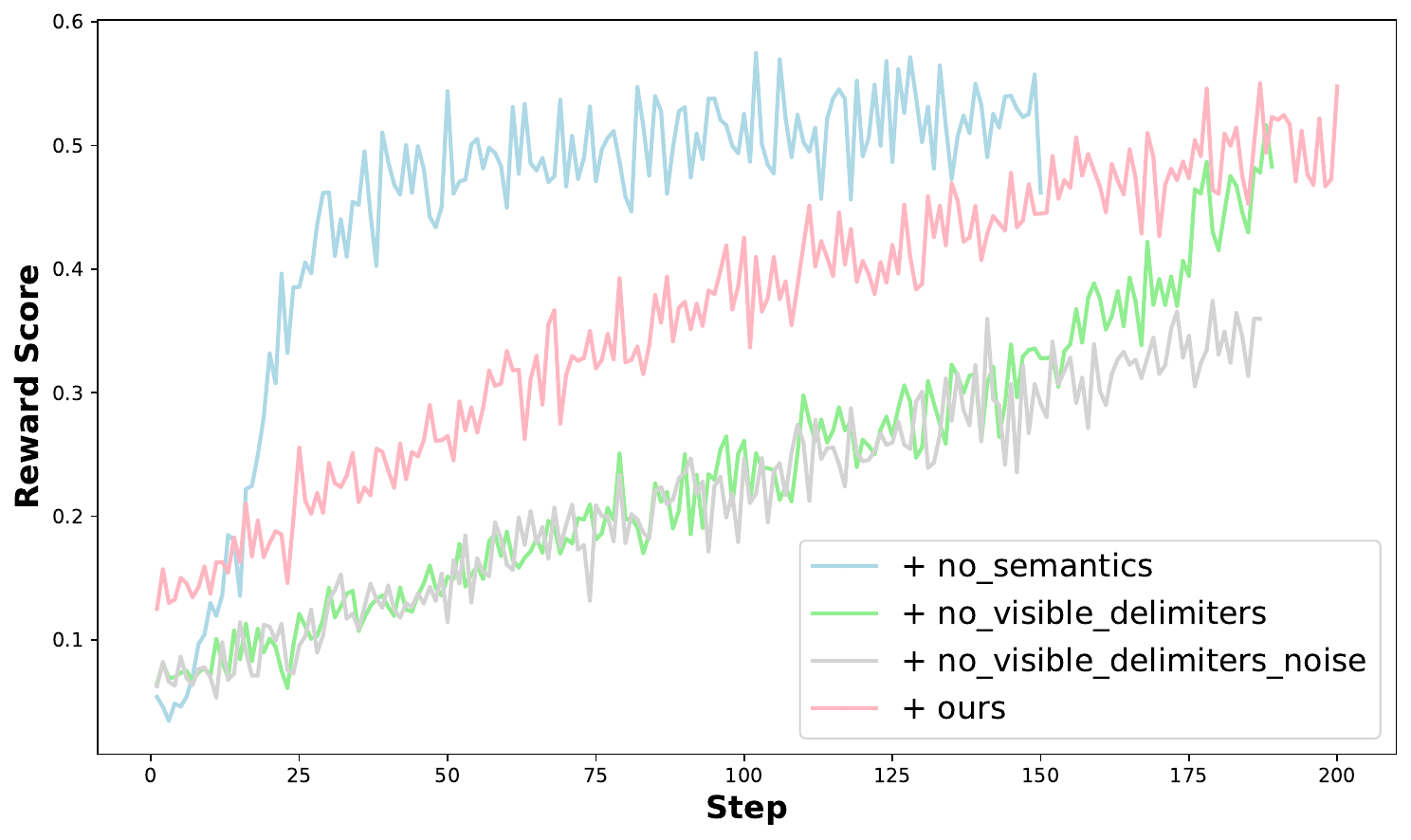}
    \subcaption{RL reward curves: Convergence and stability analysis.}
\end{subfigure}
\caption{\textbf{Decomposition experiments for DS-R1-Distill-14B.} (a) While structure alone (``no semantics'') boosts baseline performance (+1.66\%), semantics remain essential for peak results. Removing delimiters or adding \textbf{noise} yields negligible drops, confirming the primacy of intrinsic structure. (b) Models with ``no semantics'' suffer premature convergence, whereas ``no visible delimiters'' settings recover from low initial rewards.}
\label{fig:structure_exps_14B}
\end{figure} 

\begin{table}[h]
    \centering
    \caption{\textbf{Performance comparison across varying scales of length.} Bold values indicate the best performance within each backbone.}
    \label{tab:length_results}
    \begin{tabular}{lcccccccc}
        \toprule
        \textbf{Model} & \textbf{\textsc{LB-v2}} & \textbf{\textsc{Loong}} & \textbf{\textsc{Brows}} & \textbf{\textsc{GSM-Inf}} & \textbf{\textsc{Oolong}} & \textbf{\textsc{Ruler}} & \textbf{\textsc{MRCR}} & \textbf{Avg.} \\
        \midrule
        
        DS-R1-Distill-32B  &  &  &  &  &  &  &  &  \\
        \quad$|-$ 4k & 45.13 & 43.41 & \textbf{74.92} & 11.60 & 47.65 & \textbf{68.42} & 37.65 & 46.97 \\
        \quad$|-$ 8k & \textbf{48.11} & 43.89 & 74.01 & 13.20 & 47.87 & 66.05 & 39.04 & 47.45 \\
        \quad$|-$ 16k & 45.73 & \textbf{45.30} & 74.31 & \textbf{14.80} & \textbf{51.41} & 66.41 & \textbf{40.57} & \textbf{48.36} \\
        
        \midrule
        DS-R1-Distill-14B  &  &  &  &  &  &  &  &  \\
       \quad$|-$ 4k & 36.78 & \textbf{36.03} & \textbf{74.31} & 8.00 & 43.81 & 80.08 & 30.27 & 44.18 \\
        \quad$|-$ 8k & 35.79 & 35.93 & 72.78 & 6.40 & 43.52 & \textbf{81.80} & 29.30 & 43.64 \\
        \quad$|-$ 16k & \textbf{38.97} & 35.44 & \textbf{74.31} & \textbf{9.60} & \textbf{45.95} & 80.72 & \textbf{30.48} & \textbf{45.06} \\
        
        \bottomrule
    \end{tabular}
\end{table}

%% file: src/appendix_exps_length.tex
\section{Detailed Results across Different Length Ranges on LongBench v2, MRCR and Oolong-Synth.}
\label{app:exps_length}
\subsection{Main Experiments}

\begin{figure}[h]
\begin{subfigure}{0.43\textwidth}
    \centering
    \includegraphics[width=\linewidth]{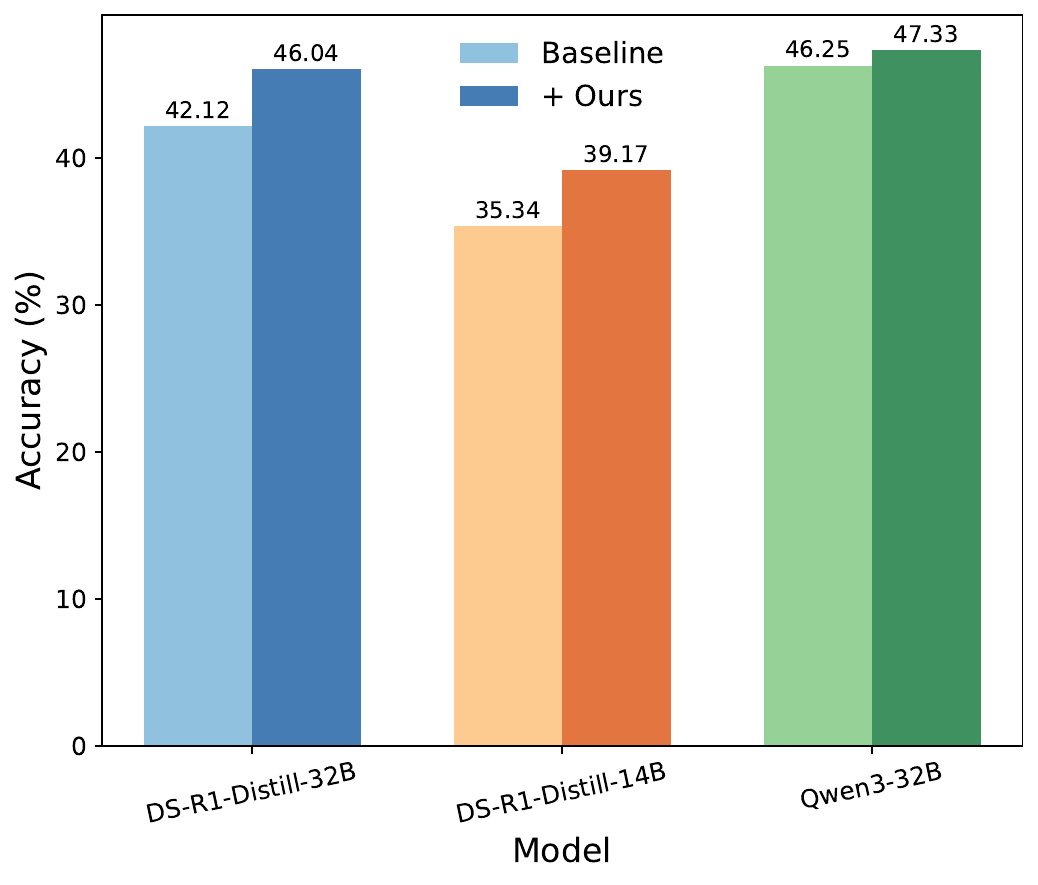}
    \subcaption{Different backbone models VS. Ours.}
\end{subfigure}
\hfill
\begin{subfigure}{0.49\textwidth}
    \centering
    \includegraphics[width=\linewidth]{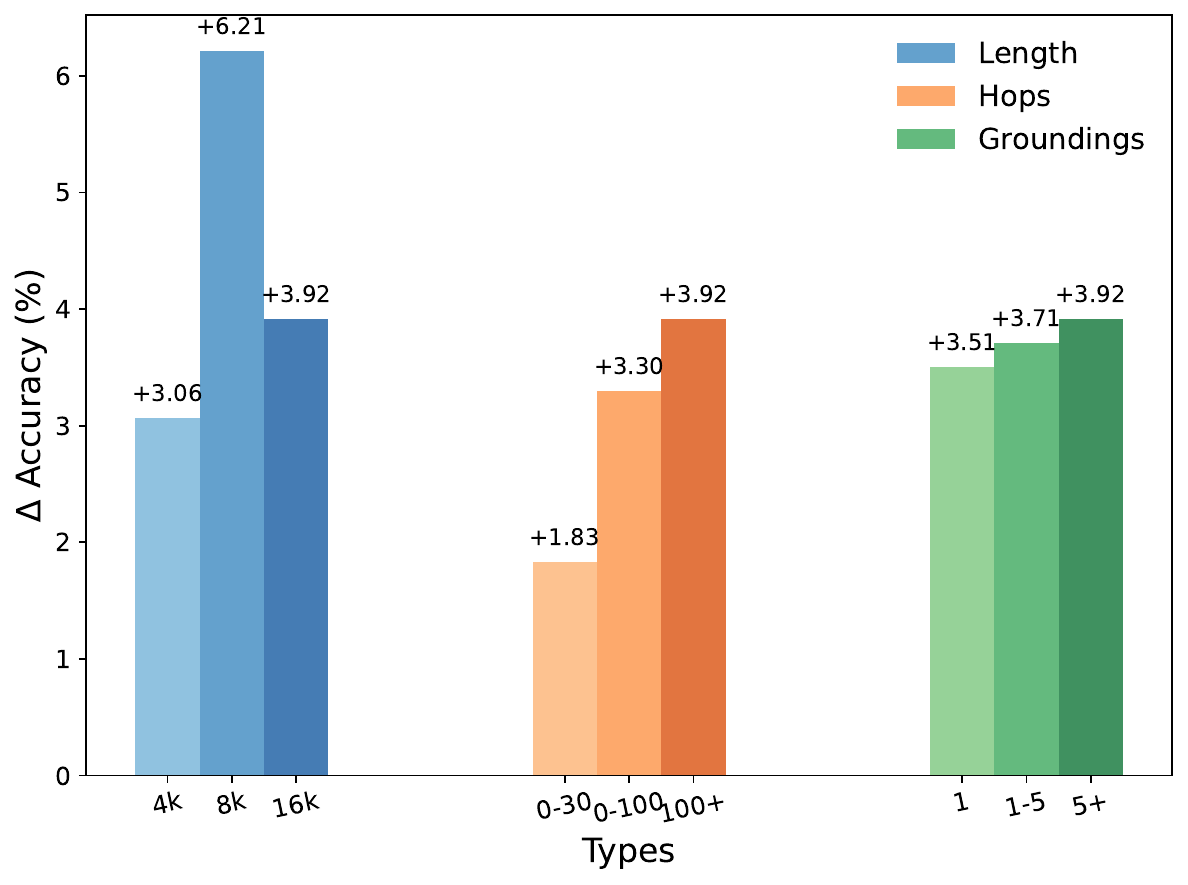}
    \subcaption{Varying lengths, hops, and grounding.}
\end{subfigure}
\caption{\textbf{Performance comparison on the 16k+ length range of LongBench-v2.}}
\label{fig:length_main_lbv2}
\end{figure} 
To further analyze the gains brought by our TableLong for long-context reasoning, we conduct an in-depth analysis of the main experimental results. Specifically, we consider three representative long-context benchmarks with different characteristics: LongBench-v2 (general real-world tasks), Oolong-Synth (information aggregation), and MRCR (multi-round QA), and evaluate performance across different length ranges. Notably, all our models are trained on table data within a 16k context window.  

First, we clearly observe that for most benchmarks, model performance exhibits a gradual degradation as the evaluated length range increases.
As shown in Figure~\ref{fig:length_main_lbv2}(a), on the real-world LongBench-v2 benchmark, our TableLong achieves consistent and significant improvements (+3.92\%) in the 16k+ length ranges across different backbones, demonstrating strong length extrapolation ability. This further indicates that table data can more effectively elicit long-context reasoning capability in larger models.

Moreover, as shown in Tables~\ref{tab:oolong_lenght_ranges} and~\ref{tab:mrcr_lenght_ranges}, on retrieval-oriented long-context benchmarks including Oolong-Synth and MRCR, models trained with our TableLong exhibit strong long-context retrieval capability. In particular, for the 64--128k length range, Deepseek-R1-Distill-Qwen-32B achieves notable improvements of +12.75\% and +4.79\%, respectively.

\begin{table}[h]
\centering
\caption{Oolong-Synth performance across different length ranges}
\label{tab:oolong_lenght_ranges}
\begin{tabular}{lcccccc}
\toprule
Model & 0-4k & 4k-8k & 8k-16k & 16k-32k & 32k-64k & 64-128k \\
\midrule
DS-R1-Distill-32B & 63.82 & 51.50 & 38.50 & 28.75 & 23.00 & 26.25 \\
DS-R1-Distill-32B + Ours & \textbf{69.49} & \textbf{62.75} & \textbf{55.50} & \textbf{45.25} & \textbf{38.50} & \textbf{39.00} \\
\midrule
DS-R1-Distill-14B & 62.31 & 51.00 & 45.50 & 37.25 & 31.25 & 27.00 \\
DS-R1-Distill-14B + Ours & \textbf{66.30} & \textbf{52.50} & \textbf{47.75} & \textbf{40.00} & \textbf{33.50} & \textbf{34.75} \\
\midrule
Qwen3-32B & 60.66 & 40.25 & 33.25 & 20.50 & 11.75 & 15.00 \\
Qwen3-32B + Ours & \textbf{63.69} & \textbf{53.75} & \textbf{53.00} & \textbf{35.75} & \textbf{28.25} & \textbf{24.25} \\
\bottomrule
\end{tabular}
\end{table}

\begin{table}[H]
\centering
\caption{MRCR performance across different length ranges}
\label{tab:mrcr_lenght_ranges}
\begin{tabular}{lcccccc}
\toprule
Model & 0-4k & 4k-8k & 8k-16k & 16k-32k & 32k-64k & 64k-128k \\
\midrule
DS-R1-Distill-32B & - & 39.43 & 38.52 & 30.41 & 27.46 & 25.62 \\
DS-R1-Distill-32B + Ours & - & \textbf{50.81} & \textbf{47.04} & \textbf{40.54} & \textbf{36.61} & \textbf{30.31} \\
\midrule
DS-R1-Distill-14B & - & 37.80 & 38.15 & 32.79 & 26.51 & 14.37 \\
DS-R1-Distill-14B + Ours & - & \textbf{44.31} & 34.07 & 28.04 & 25.76 & \textbf{22.81} \\
\midrule
Qwen3-32B & - & 52.44 & 48.15 & 40.88 & 36.27 & 35.94 \\
Qwen3-32B + Ours & - & \textbf{52.44} & 45.93 & \textbf{42.23} & \textbf{41.36} & 33.44 \\
\bottomrule
\end{tabular}
\end{table}

\subsection{Scalability Experiments of Tabular Capabilities}
\label{app:scalability_length_range}
\textbf{Length Scalability.} Similarly, as shown in Figure~\ref{fig:length_main_lbv2}(b), on the real-world LongBench-v2 benchmark, the trained models with
our TableLong exhibit progressive performance gains in the 16k+ length ranges as the input length of table data increases (+3.06\%, +6.21\%, and +3.92\%, respectively). Consistent trends are also observed on Oolong-Synth and MRCR, as reported in Tables~\ref{tab:oolong_variants_compact} and~\ref{tab:mrcr_variants_grouped}. 

\textbf{Multi-hop Reasoning Scalability.} Specifically, we also observe that as the number of instruction-designed cells increases and the instructions become more complex, models trained with our TableLong exhibit progressively larger advantages on real-world long-context understanding and long-context retrieval tasks across different length ranges. As shown in Figure~\ref{fig:length_main_lbv2}(b), on the real-world LongBench-v2 benchmark, the trained models with our TableLong exhibit progressive performance gains in the 16k+ length ranges as the input length of table data increases (+1.83\%, +3.30\%, and +3.92\%, respectively). Consistent trends are also observed on Oolong-Synth and MRCR, as reported in Tables~\ref{tab:oolong_variants_compact} and~\ref{tab:mrcr_variants_grouped}. 

\textbf{Grounding Scalability.} Similarly, as shown in Figure~\ref{fig:length_main_lbv2}(b), on the real-world LongBench-v2 benchmark, the trained models with our TableLong exhibit progressive performance gains in the 16k+ length ranges as the input length of table data increases (+3.51\%, +3.73\%, and +3.92\%, respectively). Consistent trends are also observed on Oolong-Synth and MRCR, as reported in Tables~\ref{tab:oolong_variants_compact} and~\ref{tab:mrcr_variants_grouped}. 

\begin{table}[h]
\centering
\caption{Oolong performance comparison across context lengths.}
\label{tab:oolong_variants_compact}
\begin{tabular}{lcccccc}
\toprule
Model & 0-4k & 4k-8k & 8k-16k & 16k-32k & 32k-64k & 64-128k \\
\midrule
DS-R1-Distill-32B & 63.82 & 51.50 & 38.50 & 28.75 & 23.00 & 26.25 \\
\midrule
\quad$|-$4k & 70.70 & 61.50 & 52.75 & 39.50 & 31.25 & 31.75 \\
\quad$|-$8k & 70.01 & 61.50 & 57.25 & 45.75 & 32.00 & 28.25 \\
\midrule
\quad$|-$0-30 & 68.64 & 56.75 & 34.75 & 29.75 & 21.50 & 23.00 \\
\quad$|-$0-100 & 72.94 & 60.75 & 47.50 & 32.00 & 25.50 & 25.50 \\
\midrule
\quad$|-$1 & 71.39 & 64.75 & 56.50 & 42.75 & 34.00 & 33.50 \\
\quad$|-$1-5 & 71.11 & 64.00 & 57.50 & 47.75 & 35.00 & 32.25 \\
\midrule
\quad$|-$Ours & 69.49 & 62.75 & 55.50 & 45.25 & \textbf{38.50} & \textbf{39.00} \\
\bottomrule
\end{tabular}
\end{table}
\begin{table}[h]
\centering
\caption{MRCR performance comparison across context lengths with grouping.}
\label{tab:mrcr_variants_grouped}
\begin{tabular}{l l cccccc}
\toprule
\textbf{Group} & Model & 0-4k & 4k-8k & 8k-16k & 16k-32k & 32k-64k & 64-128k \\
\midrule
 & DS-R1-Distill-32B & - & 39.43 & 38.52 & 30.41 & 27.46 & 25.62 \\
\midrule
\textbf{Length} & \quad$|-$4k & - & 48.37 & 44.81 & 33.45 & 34.92 & 29.69 \\
 & \quad$|-$8k & - & 51.63 & 43.33 & 37.84 & 34.24 & 31.87 \\
\midrule
\textbf{Cells} & \quad$|-$0-30 & - & 48.78 & 42.95 & 36.15 & 33.90 & 27.50 \\
 & \quad$|-$0-100 & - & 47.97 & 42.95 & 36.94 & 29.15 & 28.44 \\
\midrule
\textbf{Tables} & \quad$|-$1 & - & 43.50 & 39.26 & 34.80 & 36.61 & 24.06 \\
 & \quad$|-$1-5 & - & 49.19 & 40.37 & 35.47 & 36.61 & 29.38 \\
 \midrule
 & \quad$|-$Ours & - & 50.81 & \textbf{47.04} & \textbf{40.54} & \textbf{36.61} & 30.31 \\
\bottomrule
\end{tabular}
\end{table}

%% file: src/appendix_pipeline.tex
\section{Details of Data Cases}
\label{app:pipelinedetails}

\begin{figure}[h]
    \centering
    \begin{tcolorbox}[
        colback=gray!5,
        colframe=black!75,
        coltitle=white,
        boxrule=0.8pt,
        arc=3pt,
        left=8pt, right=8pt,
        top=8pt, bottom=8pt,
        fonttitle=\sffamily\large,
        title={Prompt for SQL Generation}
    ]
    \small\sffamily 
    
    System Pormpt: 
    You are a data generation expert proficient in SQL. Your task is to generate a standard SQL query and its corresponding Natural Language Question (NLQ) based on the provided schema and instructions.
    
    \vspace{8pt}
    
    Core Rules:
    \begin{enumerate}[leftmargin=*, itemsep=4pt, parsep=0pt, topsep=4pt]
        \item Table Referencing: Always use real table names from the schema; no placeholders.
        \item NLQ Requirements:
        \begin{itemize}[leftmargin=12pt, itemsep=2pt, topsep=2pt]
            \item Contextual: Questions must reflect the business context.
            \item Natural Tone: Use colloquial or business language. No SQL jargon (e.g., "calculate average"). Test intent understanding, not translation.
            \item Column Mapping: Explicitly map obscure column names (e.g., \texttt{col\_x}) to their meanings in the question.
        \end{itemize}
        \item Output Format: Return ONLY a JSON object: \texttt{\{"sql": "...", "natural\_language\_question": "..."\}}.
    \end{enumerate}
    
    \vspace{8pt}
    
    Input Data: [Table List], [Schema], [Example Data], [Difficulty Instruction]
    \end{tcolorbox}
    
    \caption{The instruction prompt used for generating SQL-NLQ pairs, which constrains SQL logic while eliciting natural, task-oriented questions.}
    \label{fig:sql_prompt_clean}
\end{figure}

\begin{figure}[h]
    \centering
    \begin{tcolorbox}[
        colback=white,
        colframe=black!75,
        colbacktitle=black!75,
        title=\textbf{Question Type: Precise Retrieval},
        sharp corners,
        fonttitle=\bfseries
    ]
    
    \textbf{Question:} \\
Can you tell me the details of the game where the team played against the Houston Oilers?
    \vspace{0.3cm}
    
    \textbf{Answer:} 
    \begin{tcolorbox}[colback=gray!5, boxrule=0pt, frame hidden, sharp corners, left=2pt, right=2pt,bottom=1pt,top=1pt]
    \scriptsize \ttfamily
| Week | Date | Opponent | Location | Time ( ET ) | Result | Record | \\
| --- | --- | --- | --- | --- | --- | --- | \\
| 13 | Sun. Dec. 3 | Houston Oilers | Three Rivers Stadium | 1:00pm | L 23–16 | 6–7 | \\

    \end{tcolorbox}
    \vspace{0.2cm}
    \noindent\rule{\textwidth}{0.4pt}
    \vspace{0.1cm}

    \textbf{Table:} 
    
    \begin{tcolorbox}[colback=gray!5, boxrule=0pt, frame hidden, sharp corners, left=2pt, right=2pt,top=1pt]
    \scriptsize \ttfamily
Table\_1 \\
| Week | Date | Opponent | Location | Time ( ET ) | Result | Record | \\
| --- | --- | --- | --- | --- | --- | --- | \\
| 1 | Sun. Sep. 10 | Cleveland Browns | Three Rivers Stadium | 4:00pm | L 51–0 | 0–1 | \\
| 2 | Sun. Sep. 17 | at Cincinnati Bengals | Riverfront Stadium | 1:00pm | L 41–10 | 0–2 | \\
| 3 | Sun. Sep. 24 | Minnesota Vikings | Three Rivers Stadium | 1:00pm | W 27–14 | 1–2 | \\
| 4 | Sun. Oct. 1 | at Detroit Lions | Pontiac Silverdome | 1:00pm | W 23–3 | 2–2 | \\
| 5 | Sun. Oct. 8 | Cincinnati Bengals | Three Rivers Stadium | 1:00pm | L 26–16 | 2–3 | \\
| 6 | Sun. Oct. 15 | at Cleveland Browns | Cleveland Municipal Stadium | 4:00pm | W 17–7 | 3–3 | \\
| 7 | Sun. Oct. 22 | at Houston Oilers | Astrodome | 1:00pm | L 27–0 | 3–4 | \\
| 8 | Sun. Oct. 29 | Kansas City Chiefs | Three Rivers Stadium | 1:00pm | W 23–17 | 4–4 | \\
| 9 | Sun. Nov. 5 | at Denver Broncos | Mile High Stadium | 4:00pm | L 34–7 | 4–5 | \\
| 10 | Sun. Nov. 12 | Chicago Bears | Three Rivers Stadium | 1:00pm9 | San Diego Chargers | Three Rivers Stadium | 1:00pm | W 20–17 | 5–6 | \\
| 12 | Sun. Nov. 26 | at Miami Dolphins | Joe Robbie Stadium | 1:00pm | W 34–14 | 6–6 | \\
| 13 | Sun. Dec. 3 | Houston Oilers | Three Rivers Stadium | 1:00pm | L 23–16 | 6–7 | \\
| 14 | Sun. Dec. 10 | at New York Jets | Giants Stadium | 1:00pm | W 13–0 | 7–7 | \\
| 15 | Sun. Dec. 17 | New England Patriots | Three Rivers Stadium | 1:00pm | W 28–10 | 8–7 | \\
| 16 | Sun. Dec. 24 | at Tampa Bay Buccaneers | Tampa Stadium | 1:00pm | W 31-22 | 9–7 | \\

    \end{tcolorbox}
    \vspace{0.2cm}

    \end{tcolorbox}
    \label{fig: precise retrieval}
    \caption{An instance of the precise retrieval grounding task.}
\end{figure}

\begin{figure}[h]
    \centering
    \begin{tcolorbox}[
        colback=white,
        colframe=black!75,
        colbacktitle=black!75,
        title=\textbf{Question Type: Multi-hop Reasoning},
        sharp corners,
        fonttitle=\bfseries
    ]
    
    \textbf{Question:} \\
Which stadiums had an average of over 13,000 fans attending each match played there?
    \vspace{0.3cm}
    
    \textbf{Answer:} 
    \begin{tcolorbox}[colback=gray!5, boxrule=0pt, frame hidden, sharp corners, left=2pt, right=2pt,bottom=1pt,top=1pt]
    \scriptsize \ttfamily
| Venue | avg\_attendance | \\
| --- | --- | \\
| Malmö Stadion | 15976.0 | \\
| Olympia | 13385.0 | \\
| Råsunda | 25983.2 | \\
    \end{tcolorbox}
    \vspace{0.2cm}
    \noindent\rule{\textwidth}{0.4pt}
    \vspace{0.1cm}

    \textbf{Table:} 
    
    \begin{tcolorbox}[colback=gray!5, boxrule=0pt, frame hidden, sharp corners, left=2pt, right=2pt,top=1pt]
    \scriptsize \ttfamily
Table \\
| Date | Venue | Opponents | Score | Comp | Attendance | \\
| --- | --- | --- | --- | --- | --- | \\
| 2007-04-07 | Råsunda | BP | 0–1 | Allsv. | 15 092 | \\
| 2007-04-16 | Stadion | Halmstad | 2–0 | Allsv. | 10 747 | \\
| 2007-04-21 | Stadion | Helsingborg | 3–1 | Allsv. | 11 223 | \\
| 2007-04-29 | Fredriksskans | Kalmar | 0–1 | Allsv. | 7 349 | \\
| 2007-05-07 | Borås Arena | Elfsborg | 2–2 | Allsv. | 11 533 | \\
| 2007-05-14 | Stadion | GAIS | 0–1 | Allsv. | 10 122 | \\
| 2007-05-21 | Vångavallen | Trelleborg | 3–0 | Allsv. | 3 012 | \\
| 2007-05-28 | Råsunda | AIK | 3–1 | Allsv. | 32 529 | \\
| 2007-06-12 | Stadion | Göteborg | 2–1 | Allsv. | 12 697 | \\
| 2007-06-19 | Råsunda | Hammarby | 0–2 | Allsv. | 23 545 | \\
| 2007-06-25 | Behrn Arena | Örebro | 0–0 | Allsv. | 11 565 | \\
| 2007-07-02 | Stadion | Gefle | 2–1 | Allsv. | 12 030 | \\
| 2007-07-07 | Stadion | Malmö | 1–0 | Allsv. | 11 515 | \\
| 2007-07-17 | Malmö Stadion | Malmö | 1–1 | Allsv. | 15 976 | \\
| 2007-07-21 | Stadion | Elfsborg | 2–1 | Allsv GAIS | 1–1 | Allsv. | 6 453 | \\
| 2007-08-13 | Råsunda | Hammarby | 1–0 | Allsv. | 24 634 | \\
| 2007-08-16 | Nya Ullevi | Göteborg | 1–1 | Allsv. | 12 187 | \\
| 2007-08-27 | Stadion | Kalmar | 1–3 | Allsv. | 13 387 | \\
| 2007-09-03 | Olympia | Helsingborg | 4–1 | Allsv. | 13 385 | \\
| 2007-09-17 | Stadion | Trelleborg | 1–1 | Allsv. | 9 700 | \\
| 2007-09-24 | Råsunda | AIK | 1–1 | Allsv. | 34 116 | \\
| 2007-09-29 | Stadion | Örebro | 4–1 | Allsv. | 9 311 | \\
| 2007-10-07 | Strömvallen | Gefle | 2–0 | Allsv. | 7 130 | \\
| 2007-10-22 | Örjans Vall | Halmstad | 2–1 | Allsv. | 7 147 | \\
| 2007-10-28 | Stadion | BP | 0–1 | Allsv. | 14 222 | 
    \end{tcolorbox}
    \vspace{0.2cm}

    \end{tcolorbox}
\caption{An instance of the multi-hop reasoning task.}
    \label{fig:multi-hop reasoning case}
\end{figure}

\begin{figure}[h]
    \centering
    \begin{tcolorbox}[
        colback=white,
        colframe=black!75,
        colbacktitle=black!75,
        title=\textbf{Question Type: Multi-table Grounding},
        sharp corners,
        fonttitle=\bfseries
    ]
    
    \textbf{Question:} \\
For each constituency in the table\_3\_election results where the winner's party has an entry in the party performance summary table, list the constituency name, the winner's name and party, how many total seats that winning party won nationwide, the runner-up's name and party, and how many total seats the runner-up's party won nationwide
    \vspace{0.3cm}
    
    \textbf{Answer:} 
    \begin{tcolorbox}[colback=gray!5, boxrule=0pt, frame hidden, sharp corners, left=2pt, right=2pt,bottom=1pt,top=1pt]
    \scriptsize \ttfamily
| Constituency | Winner\_Name | Winner\_Party | Winner\_Party\_Total\_Seats\_Won | Runner\_Up\_Name | Runner\_Up\_Party | Runner\_Up\_Party\_Total\_Seats\_Won | \\
| --- | --- | --- | --- | --- | --- | --- | \\
| 1. Chennai North | D. Pandian | Indian National Congress | 10 | Aladi Aruna | DMK |  | \\
| 2. Chennai Central | Era Anbarasu c | Indian National Congress | 10 | N. V. N. Somu | DMK |  | \\
| 4. Sriperumbudur | Margatham Chandrasekar c | Indian National Congress | 10 | K. Sundaram | DMK |  | \\
| 6. Arakkonam | R. Jeevarathinam c | Indian National Congress | 10 | M. Kannaiyan | DMK |  | \\
| 7. Vellore | B. Akber Pasha | Indian National Congress | 10 | P. Shanmugam | DMK |  | \\

    \end{tcolorbox}
    \vspace{0.2cm}
    \noindent\rule{\textwidth}{0.4pt}
    \vspace{0.1cm}

    \textbf{Table:} 
    
    \begin{tcolorbox}[colback=gray!5, boxrule=0pt, frame hidden, sharp corners, left=2pt, right=2pt,top=1pt]
    \scriptsize \ttfamily
Table\_1 \\
| Constituency | Winner | Party | Margin | Runner-up a | Party a | \\
| --- | --- | --- | --- | --- | --- | \\
| 1. Chennai North | D. Pandian | Indian National Congress | 118518 | Aladi Aruna | DMK | \\
| 2. Chennai Central | Era Anbarasu c | Indian National Congress | 103271 | N. V. N. Somu | DMK | \\
| 3. Chennai South | R. Sridharan | Anna Dravida Munnetra Kazhagam | 162528 | T. R. Balu | DMK | \\
\vdots \\

Table\_2 \\
| Constituency | Winner | Party | Margin | Runner-up a | Party a | \\
| --- | --- | --- | --- | --- | --- | \\
| 1. Chennai North | D. Pandian | Indian National Congress | 118518 | Aladi Aruna | DMK | \\
| 2. Chennai Central | Era Anbarasu c | Indian National Congress | 103271 | N. V. N. Somu | DMK | \\
| 3. Chennai South | R. Sridharan | Anna Dravida Munnetra Kazhagam | 162528 | T. R. Balu | DMK | \\
\vdots \\

Table\_3 \\
| Constituency | Winner | Party | Margin | Runner-up a | Party a | \\
| --- | --- | --- | --- | --- | --- | \\
| 1. Chennai North | D. Pandian | Indian National Congress | 118518 | Aladi Aruna | DMK | \\
| 2. Chennai Central | Era Anbarasu c | Indian National Congress | 103271 | N. V. N. Somu | DMK | \\
| 3. Chennai South | R. Sridharan | Anna Dravida Munnetra Kazhagam | 162528 | T. R. Balu | DMK | \\
\vdots \\

[N complete tables abbreviated]

    \end{tcolorbox}
    \vspace{0.2cm}

    \end{tcolorbox}
    \caption{An instance of the multi-table grounding task.}
    \label{fig: grounding_case}
\end{figure}

\begin{figure}[h]
    \centering
    \begin{tcolorbox}[
        colback=white,
        colframe=black!75,
        colbacktitle=black!75,
        title=\textbf{Question Type: No Semantic},
        sharp corners,
        fonttitle=\bfseries
    ]
    
    \textbf{Question:} \\
What is the element in row "born" and column "people" in Table 2?

Note: The first row contains column names, and the first column contains row names.

Only output the answer, do not output any other irrelevant content.
    \vspace{0.3cm}
    
    \textbf{Answer:} 
    \begin{tcolorbox}[colback=gray!5, boxrule=0pt, frame hidden, sharp corners, left=2pt, right=2pt,bottom=1pt,top=1pt]
    \scriptsize \ttfamily
walking

    \end{tcolorbox}
    \vspace{0.2cm}
    \noindent\rule{\textwidth}{0.4pt}
    \vspace{0.1cm}

    \textbf{Table:} 
    
    \begin{tcolorbox}[colback=gray!5, boxrule=0pt, frame hidden, sharp corners, left=2pt, right=2pt,top=1pt]
    \scriptsize \ttfamily
Table 1 - County: \\
|            | Tubby      | for           | account    | behalf    | four    | igneous     | 35th     | The       | \\
| --- | --- | --- | --- | --- | --- | --- | --- | --- | \\
| number     | Haim       | meant         | the        | was       | many    | was         | grabbed  | when      | \\
| top        | Nine       | September     | the        | work      | local   | that        | Players  | recorded  | \\
| they       | that       | cites         | paraboloid | Caribbean | SSAU    | hardhitting | town     | protein   | \\
| which      | Though     | produced      | Kepler     | The       | school  | supported   | the      | 1972      | \\
| Population | University | complementary | dramas     | the       | tracked | drove       | approved | impedance | \\
 \\
Table 2 - people: \\
|              | Morrison     | probably   | takes      | unbearably   | city      | led       | people      | are      | \\
| --- | --- | --- | --- | --- | --- | --- | --- | --- | \\
| Proudhon     | the          | declining  | Beta       | his          | were      | According | God         | was      | \\
| the          | the          | for        | large      | Benson       | the       | the       | 1812        | very     | \\
| Lafayette    | Vespers      | 1970       | those      | Production   | second    | Strings   | nonprofit   | their    | \\
| next         | receiver     | Coast      | Business   | Moskin       | Street    | airports  | grandfather | meets    | \\
| quiet        | C11          | 18681903   | gene       | considerably | million   | use       | were        | 2010     | \\
| born         | Philadelphia | Awards     | one        | best         | Summer    | close     | days        | shared   | \\
| afraid       | Nerangalil   | one        | city       | was          | They      | was       | hand        | 19362012 | \\
| prohibitions | from         | pogrom     | housing    | changing     | later     | Hockey    | and         | two      | \\
| volumes      | disgust      | walking    | the        | case         | Bank      | but       | the         | 1316     | \\
| the          | and          | continued  | introduced | The          | Christian | games     | episodes    | played   | \\
 \\
Table 3 - That: \\
|               | largely      | golf        | that       | the          | and         | Yelena   | history        | Engineers   | Central      | was         | \\
| --- | --- | --- | --- | --- | --- | --- | --- | --- | --- | --- | \\
| the           | Ignition     | Clarence    | Stamp      | Silverbelles | city        | provider | proof          | with        | record       | President   | \\
| world         | Winning      | travel      | 1997       | however      | State       | the      | After          | Local       | the          | adolescents | \\
| online        | North        | Moses       | August     | himself      | whose       | causing  | English        | but         | the          | sales       | \\
| fort          | and          | there       | Government | with         | Even        | Diane    | Lycopodiophyta | stages      | the          | role        | \\
| than          | Glen         | assumed     | 1300       | Bolton       | 2014        | uses     | data           | rarely      | the          | doing       | \\
| leadtinyellow | productivity | leads       | The        | subdivided   | into        | research | were           | Day         | opening      | Marx        | \\
| working       | Bastam       | the         | Super      | two          | transformed | who      | the            | His         | Chicago      | had         | \\
| His           | Gaddafi      | participant | and        | recorded     | live        | after    | applications   | greater     | disqualified | designing   | \\
\vdots  ~\\
\relax [N complete tables abbreviated]

    \end{tcolorbox}
    \vspace{0.2cm}

    \end{tcolorbox}
    \caption{An instance of the semantic-agnostic table task.}
    \label{fig:nosematic}
\end{figure}

\begin{figure}[h]
    \centering
    \begin{tcolorbox}[
        colback=white,
        colframe=black!75,
        colbacktitle=black!75,
        title=\textbf{Question Type: No Visible Delimiters},
        sharp corners,
        fonttitle=\bfseries
    ]
    
    \textbf{Question:} \\
Can you list the 2 most recent seasons' information—including season, player name, position, and class—for UCLA players?
    \vspace{0.3cm}
    
    \textbf{Answer:} 
    \begin{tcolorbox}[colback=gray!5, boxrule=0pt, frame hidden, sharp corners, left=2pt, right=2pt,bottom=1pt,top=1pt]
    \scriptsize \ttfamily
| Season | Player | Position | Class | \\
| --- | --- | --- | --- | \\
| 1976–77 | Marques Johnson Category:Articles with hCards | Guard / Forward |  Senior | \\
| 1973–74 | Bill Walton Category:Articles with hCards (3) | Center | Senior | \\

    \end{tcolorbox}
    \vspace{0.2cm}
    \noindent\rule{\textwidth}{0.4pt}
    \vspace{0.1cm}

    \textbf{Table:} 
    
    \begin{tcolorbox}[colback=gray!5, boxrule=0pt, frame hidden, sharp corners, left=2pt, right=2pt,top=1pt]
    \scriptsize \ttfamily
Season Player School Position Class 1971–72 Bill Walton Category:Articles with hCards UCLA Center Sophomore 1972–73 Bill Walton Category:Articles with hCards (2) UCLA Center Junior 1973–74 Bill Walton Category:Articles with hCards (3) UCLA Center Senior 1974–75 David Thompson Category:Articles with hCards NC State Shooting guard / Small forward Senior 1975–76 Scott May Category:Articles with hCards Indiana Forward Senior 1976–77 Marques Johnson Category:Articles with hCards UCLA Guard / Forward Senior 1977–78 Butch Lee Category:Articles with hCards Marquette Point guard Senior 1978–79 Larry Bird Category:Articles with hCards Indiana State Small forward Senior 1979–80 Mark Aguirre 
\dots

    \end{tcolorbox}
    \vspace{0.2cm}

    \end{tcolorbox}
\caption{An instance of the no-visible-delimiter task.}
    \label{fig:delimiters}
\end{figure}

\begin{figure}[h]
    \centering
    \begin{tcolorbox}[
        colback=white,
        colframe=black!75,
        colbacktitle=black!75,
        title=\textbf{Question Type: No Visible Delimiters With Noise},
        sharp corners,
        fonttitle=\bfseries
    ]
    
    \textbf{Question:} \\
I want to find artists who have the same number of number one albums and singles, and also spent the same total weeks at the top spot for both. Can you show me their name, how many number ones they had, total weeks at number one, and the details of the albums and singles that got them there?
    \vspace{0.3cm}
    
    \textbf{Answer:} 
    \begin{tcolorbox}[colback=gray!5, boxrule=0pt, frame hidden, sharp corners, left=2pt, right=2pt,bottom=1pt,top=1pt]
    \scriptsize \ttfamily
| Artist | NumberOfNumberOneHits | TotalWeeksAtNumberOne | AlbumDetails | SingleDetails | \\
| --- | --- | --- | --- | --- |
| Katy Perry | 2 | 5 | Teenage Dream: The Complete Confection — 4 | " Part Of Me " — 2 |\\
| Katy Perry | 2 | 5 | Teenage Dream: The Complete Confection — 4 | " Roar " — 3 | \\

    \end{tcolorbox}
    \vspace{0.2cm}
    \noindent\rule{\textwidth}{0.4pt}
    \vspace{0.1cm}

    \textbf{Table:} 
    
    \begin{tcolorbox}[colback=gray!5, boxrule=0pt, frame hidden, sharp corners, left=2pt, right=2pt,top=1pt]
    \scriptsize \ttfamily
Number One(s) Artist(s) Song(s) — Weeks Issue Years Whole Weeks 2 One Direction Up All Night — 11 2012 13 2 One Direction \colorbox{orange!30}{as he looked at it, and his thick black hair seemed to bristle up} Take Me Home — 2 2012 13 2 Rihanna Talk That Talk — 1 2012 2 2 Rihanna Unapologetic — 1 2012 2 2 Luke Bryan Spring Break...Here To Party — 1 2013 2 2 \colorbox{orange!30}{the murmuring voice of Dr.} Luke Bryan Crash My Party — 1 2013 2 2 Justin Timberlake The 20/20 Experience — 3 2013 4 2 Justin Timberlake The 20/20 Experience: 2 of 2 — 1 2013 4 2 Katy Perry Teenage Dream: The Complete Confection — 4 2012 5 
\dots

    \end{tcolorbox}
    \vspace{0.2cm}

    \end{tcolorbox}

\caption{An instance of the no-visible-delimiter task with highlighted noise.}
\label{fig:noise}
\end{figure}

%% file: src/appendix_performance_case_study.tex
\section{Performance Case Study}
\label{app:performance_case_study}

In this section, we present a qualitative performance analysis through two representative case studies. We aim to demonstrate how our method mitigates specific reasoning failure modes observed in the baseline model. First, we examine a linearized table reasoning task (Figure \ref{fig:case_study_comparison}), where we highlight the model's improved ability to maintain schema alignment and avoid spatial hallucinations in non-semantic contexts. Second, we analyze a complex combinatorial problem from AIME 2025 (Figure \ref{fig:aime_case_study}), illustrating how our approach enhances strategy selection and constraint propagation to overcome symmetry bias in exhaustive enumeration.

\definecolor{cs_frame}{RGB}{60, 60, 60}
\definecolor{cs_bg}{RGB}{255, 255, 255}
\definecolor{cs_high}{RGB}{235, 245, 235}
\definecolor{cs_red}{RGB}{180, 0, 0}
\definecolor{cs_green}{RGB}{0, 100, 0}

\begin{figure}[h]
    \centering
    \begin{tcolorbox}[
        enhanced,
        width=\textwidth,
        colback=cs_bg,
        colframe=cs_frame,
        boxrule=1.2pt,
        arc=2pt,
        title={\textbf{Case Study: Table Reasoning Comparison}},
        fonttitle=\large\bfseries,
        coltitle=white,
        attach boxed title to top left={yshift=-2mm, xshift=5mm},
        boxed title style={colback=cs_frame, boxrule=0pt, arc=2pt},
        top=1em, bottom=0.5em, left=0.5em, right=0.5em,
        boxsep=2pt
    ]

    \noindent\textbf{Question:} Which 2 flights have the highest record IDs, and what are their destination airport code, source airport code, and airline?

    \vspace{0.2cm}

    \noindent\textbf{Input Table (Partial):}
    \vspace{-0.1cm}
    \begin{center}
    \scriptsize
    \setlength{\tabcolsep}{4pt}
    \begin{tabular}{llllllll}
    \toprule
    \textbf{rid} & \textbf{dst\_id} & \textbf{dst\_ap} & \textbf{src\_id} & \textbf{src\_ap} & \textbf{alid} & \textbf{al} & \textbf{code} \\
    \midrule
    1392 & 3998 & PMI & 373 & DTM & 2548 & 4 & \\
    \multicolumn{8}{c}{\dots \textit{(rows omitted)} \dots} \\
    1846 & 2397 & M & 3885 & BKK & 1683 & 5J & \\
    \rowcolor{cs_high} 1847 & 2397 & M & 3272 & BW & 1683 & 5J & \\
    \rowcolor{cs_high} 1848 & 4206 & CEB & 4200 & BX & 1683 & 5J & \\
    \bottomrule
    \end{tabular}
    \end{center}

    \vspace{-0.1cm}

    \noindent\textbf{Reference Answer:}
    \vspace{-0.2cm}
    \begin{center}
    \scriptsize
    \setlength{\tabcolsep}{12pt}
    \begin{tabular}{llll}
    \toprule
    \textbf{rid} & \textbf{dst} & \textbf{src} & \textbf{al} \\
    \midrule
    1848 & CEB & BX & 5J \\
    1847 & M & BW & 5J \\
    \bottomrule
    \end{tabular}
    \end{center}

    \vspace{0.1cm}
    \hrule
    \vspace{0.2cm}

    \noindent\textbf{Model Reasoning \& Output Comparison:}
    \vspace{0.1cm}

    \footnotesize
    \renewcommand{\arraystretch}{1.1}
    \begin{tabularx}{\linewidth}{@{}X|X@{}}
    \toprule
    \textbf{DS-R1-Distill-32B (Incorrect)} & \textbf{DS-R1-Distill-32B + Ours (Correct)} \\
    \midrule
    \textit{\textbf{Reasoning Content (Excerpt):}} \newline
    ``...Next, I need to extract the relevant information for these two flights. 
    For rid=1848, looking at the columns: dst\_apid is 4200, \textcolor{cs_red}{\textbf{dst\_ap is BX}}, src\_apid is 2397, \textcolor{cs_red}{\textbf{src\_ap is M}}, alid is 1683, airline is 5J... So, this flight is from \textcolor{cs_red}{\textbf{M to BX}} operated by airline 5J. \newline
    For rid=1847, the dst\_apid is 4206, \textcolor{cs_red}{\textbf{dst\_ap is CEB}}, src\_apid is 2397, \textcolor{cs_red}{\textbf{src\_ap is M}}... This flight is from \textcolor{cs_red}{\textbf{M to CEB}}...''
    & 
    \textit{\textbf{Reasoning Content (Excerpt):}} \newline
    ``...From the table, looking for the row where rid is 1848. It's listed as: \newline
    \texttt{1848 | 4206 | \textcolor{cs_green}{\textbf{CEB}} | 4200 | \textcolor{cs_green}{\textbf{BX}} | 1683 | 5J |} \newline
    ...Looking back, before 1848, the previous entry is 1847. Row for rid=1847 is: \newline
    \texttt{1847 | 2397 | \textcolor{cs_green}{\textbf{M}} | 3272 | \textcolor{cs_green}{\textbf{BW}} | 1683 | 5J |} ...''
    \\
    \midrule
    \textit{\textbf{Final Answer:}} \newline
    1. \textbf{Flight 1848}: Dest: \textbf{BX} \textcolor{cs_red}{(Wrong)}, Src: \textbf{M} \textcolor{cs_red}{(Wrong)} \newline
    2. \textbf{Flight 1847}: Dest: \textbf{CEB} \textcolor{cs_red}{(Wrong)}, Src: \textbf{M} \textcolor{cs_red}{(Wrong)} \newline
    \textbf{Result:} Systematic Column Mapping Error.
    & 
    \textit{\textbf{Final Answer:}} \newline
    The flights with the highest record IDs (1847 and 1848) have destinations at "M" and "CEB". \newline
    - Flight 1847: \textbf{M} (source \textbf{BW}) \newline
    - Flight 1848: \textbf{CEB} (source \textbf{BX}) \newline
    \textbf{Result:} Correct Schema Alignment.
    \\
    \bottomrule
    \end{tabularx}

    \end{tcolorbox}
    \vspace{-0.2cm}
    \caption{Comparison of reasoning chains. The baseline model (left) hallucinates column values (e.g., confusing `dst\_ap` with `src\_ap`'s neighbor), while our model (right) correctly parses the row as a sequence.}
    \label{fig:case_study_comparison}
\end{figure}

\definecolor{cs_frame}{RGB}{60, 60, 60}
\definecolor{cs_bg}{RGB}{255, 255, 255}
\definecolor{cs_high}{RGB}{235, 245, 235}
\definecolor{cs_red}{RGB}{180, 0, 0}
\definecolor{cs_green}{RGB}{0, 100, 0}
\definecolor{tagblue}{RGB}{0, 0, 139}

\begin{figure}[htbp]
    \centering
    \begin{tcolorbox}[
        enhanced,
        width=\textwidth,
        colback=cs_bg,
        colframe=cs_frame,
        boxrule=1.2pt,
        arc=2pt,
        title={\textbf{Case Study: Combinatorial Reasoning with Symmetry Constraints}},
        fonttitle=\large\bfseries,
        coltitle=white,
        attach boxed title to top left={yshift=-2mm, xshift=5mm},
        boxed title style={colback=cs_frame, boxrule=0pt, arc=2pt},
        top=1em, bottom=0.5em, left=0.5em, right=0.5em,
        boxsep=2pt
    ]

    \noindent\textbf{Question (AIME 2025 ID 17):} 
    Four unit squares form a $2 \times 2$ grid. Each of the 12 unit line segments forming the sides of the squares is colored either red or blue in such a way that each unit square has 2 red sides and 2 blue sides. Find the number of such colorings.

    \vspace{0.3cm}

    \noindent\textbf{Reference Answer:} \newline
    82

    \vspace{0.2cm}
    \hrule
    \vspace{0.2cm}

    \noindent\textbf{Model Reasoning \& Output Comparison:}
    \vspace{0.1cm}

    \footnotesize
    \renewcommand{\arraystretch}{1.2}
    \begin{tabularx}{\linewidth}{@{}X|X@{}}
    \toprule
    \textbf{DS-R1-Distill-32B (Incorrect)} & \textbf{DS-R1-Distill-32B + Ours (Correct)} \\
    \midrule
    \textit{\textbf{Reasoning Content (Excerpt):}} \newline
    Decouple grid into "Internal" vs "Boundary" edges. \newline
    $\to$ \textcolor{tagblue}{[\textbf{Strategy Selection}]} Opt for exhaustive enumeration of the 16 binary states of internal edges ($2^4$). \newline
    $\to$ Analyze boundary constraints for each internal state. \newline
    $\to$ \textcolor{cs_red}{[\textbf{Symmetry Bias}]} For asymmetric internal states (e.g., exactly 1 red internal edge), incorrectly generalizes degrees of freedom from symmetric cases. \newline
    $\to$ \textcolor{cs_red}{[\textbf{Combinatorial Hallucination}]} Calculates boundary options as $2^4=16$ instead of the true constrained count $2^2 \times 1^2=4$.
    & 
    \textit{\textbf{Reasoning Content (Excerpt):}} \newline
    Define edge variables $H_i, V_i$. \newline
    $\to$ Construct linear system where $\sum E_{\text{square}} = 2$. \newline
    $\to$ \textcolor{cs_green}{[\textbf{Constraint Propagation}]} Express internal edges as dependent variables of boundary edges to reduce search space. \newline
    $\to$ \textcolor{cs_green}{[\textbf{Dimensionality Reduction}]} Classify problem into 4 symmetric cases based on vertical internal edges $(V_3, V_4)$. \newline
    $\to$ \textcolor{cs_green}{[\textbf{Verification}]} Calculate valid boundary sums for disjoint sets ($25+16+16+25$).
    \\
    \midrule
    \textit{\textbf{Final Answer:}} \newline
    Sum $= 84$. \newline
    \textbf{Result:} \textcolor{cs_red}{Incorrect (Bias in Asymmetric Cases)}
    & 
    \textit{\textbf{Final Answer:}} \newline
    Sum $= 82$. \newline
    \textbf{Result:} \textcolor{cs_green}{Correct (Exact Solution)}
    \\
    \bottomrule
    \end{tabularx}

    \end{tcolorbox}
    \vspace{-0.2cm}
    \caption{Comparison of reasoning strategies on the AIME 2025 grid coloring problem. The baseline (DS-R1-Distill-32B) relies on exhaustive enumeration but fails due to symmetry bias (Result: 84). In contrast, our model employs algebraic modeling and variable reduction to enforce precise constraint propagation (Result: 82).}
    \label{fig:aime_case_study}
\end{figure}

%% file: src/limitation.tex
\section{Limitation}
\label{sec:limitation}
Our work offers a pioneering perspective on data design for RL training tailored to long-context reasoning. Nevertheless, several limitations remain. First, this study primarily focuses on the impact of structured tabular data on long-context reasoning, while other data modalities (e.g., documents and graphs) remain largely unexplored and merit further investigation. Moreover, future research should systematically examine the coupling effects among different types of RL training data, as well as develop specialized reinforcement learning algorithms, to further maximize the long-context reasoning capabilities of LLMs.